\documentclass[11pt,letterpaper]{mystyle}

\usepackage[comma,authoryear,compress]{natbib}
\usepackage{graphicx}
\usepackage{subcaption}
\usepackage{booktabs}
\usepackage{amsmath}
\usepackage{amssymb}
\usepackage{amsthm}
\usepackage{algorithm}
\usepackage{algorithmic}
\usepackage{placeins}
\usepackage{listings}
\tcbuselibrary{skins,breakable,listings}

\usepackage{amsmath,amsfonts,bm}

\def\eqref#1{equation~\ref{#1}}

\def\1{\bm{1}}

\DeclareMathAlphabet{\mathsfit}{\encodingdefault}{\sfdefault}{m}{sl}
\SetMathAlphabet{\mathsfit}{bold}{\encodingdefault}{\sfdefault}{bx}{n}

\theoremstyle{plain}
\newtheorem{proposition}{Proposition}

\newtheorem{assumption}{Assumption}
\newtheorem{theorem}{Theorem}

\definecolor{promptframe}{RGB}{31,56,100}
\newtcblisting{promptbox}[1]{
  listing only,
  breakable,
  enhanced,
  colback=white,
  colframe=promptframe,
  boxrule=0.5pt,
  arc=1pt,
  title={#1},
  fonttitle=\bfseries\small,
  left=4pt,
  right=4pt,
  top=2pt,
  bottom=2pt,
  listing options={
    basicstyle=\ttfamily\scriptsize,
    breaklines=true,
    breakatwhitespace=true,
    columns=fullflexible,
    keepspaces=true,
    showstringspaces=false,
    upquote=true,
    frame=none,
    aboveskip=0pt,
    belowskip=0pt
  }
}

\title{\LARGE UOPD: Uncertainty-Aware Intervention for \\
On-Policy Distillation of Multi-Turn Agents}
\runningtitle{UOPD: Uncertainty-Aware Intervention for On-Policy Distillation of Multi-Turn Agents}

\author{\textbf{Wenbo Zhang}\textsuperscript{1,*} \quad
\textbf{Pengcheng Xu}\textsuperscript{1,*} \quad
\textbf{Weizhi Du}\textsuperscript{2} \quad
\textbf{Jing Zhang}\textsuperscript{1} \quad
\textbf{Hengrui Cai}\textsuperscript{1} \\
\textsuperscript{1}University of California, Irvine \quad
\textsuperscript{2}University of Michigan, Ann Arbor
}

\hypersetup{
  pdftitle={UOPD: Uncertainty-Aware Intervention for On-Policy Distillation of Multi-Turn Agents},
  pdfauthor={Wenbo Zhang, Pengcheng Xu, Weizhi Du, Jing Zhang, Hengrui Cai}
}

\begin{document}

\begin{abstract}
On-policy distillation (OPD) trains a student on its own rollouts using dense supervision from a teacher. In multi-turn environments, a mistake at a critical decision step can redirect the subsequent rollout toward poor outcomes. We use low teacher confidence on student actions to select high-uncertainty steps for correction. In a controlled ALFWorld study, a single teacher correction at a low-confidence step improves subsequent student behavior and task success, motivating selective intervention during distillation. We propose \textbf{UOPD}, an uncertainty-aware intervention method for on-policy distillation. At low-uncertainty turns, UOPD executes student actions and applies the standard OPD loss. At high-uncertainty turns, it samples and executes teacher actions and trains the student to imitate them through supervised fine-tuning, which minimizes forward Kullback–Leibler divergence in expectation. UOPD utilizes adaptive uncertainty thresholds to target a scheduled intervention rate. Empirically, we evaluate UOPD across a broad range of agentic tasks, including ALFWorld, WebShop, and Search, demonstrating its superior performance over OPD methods and their variants. UOPD improves WebShop score by up to $15.8\%$ relative to standard OPD.

\vspace{6mm}
\textbf{Date:} September 26, 2026

\raisebox{-0.15\height}{\includegraphics[height=1em]{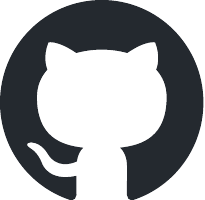}}
\textbf{Code:}
\href{https://github.com/onepounchman/UOPD}{https://github.com/onepounchman/UOPD}

\textbf{Author Emails:}
\href{mailto:wenbz13@uci.edu,pengchx3@uci.edu,hengrc1@uci.edu}{\{wenbz13,pengchx3,hengrc1\}@uci.edu}
\end{abstract}

\maketitle
\begingroup
\renewcommand{\thefootnote}{*}
\footnotetext{These authors contributed equally to this work.}
\endgroup

\section{Introduction}
\label{sec:intro}

\begin{figure}[t]
\centering
\includegraphics[width=\linewidth]{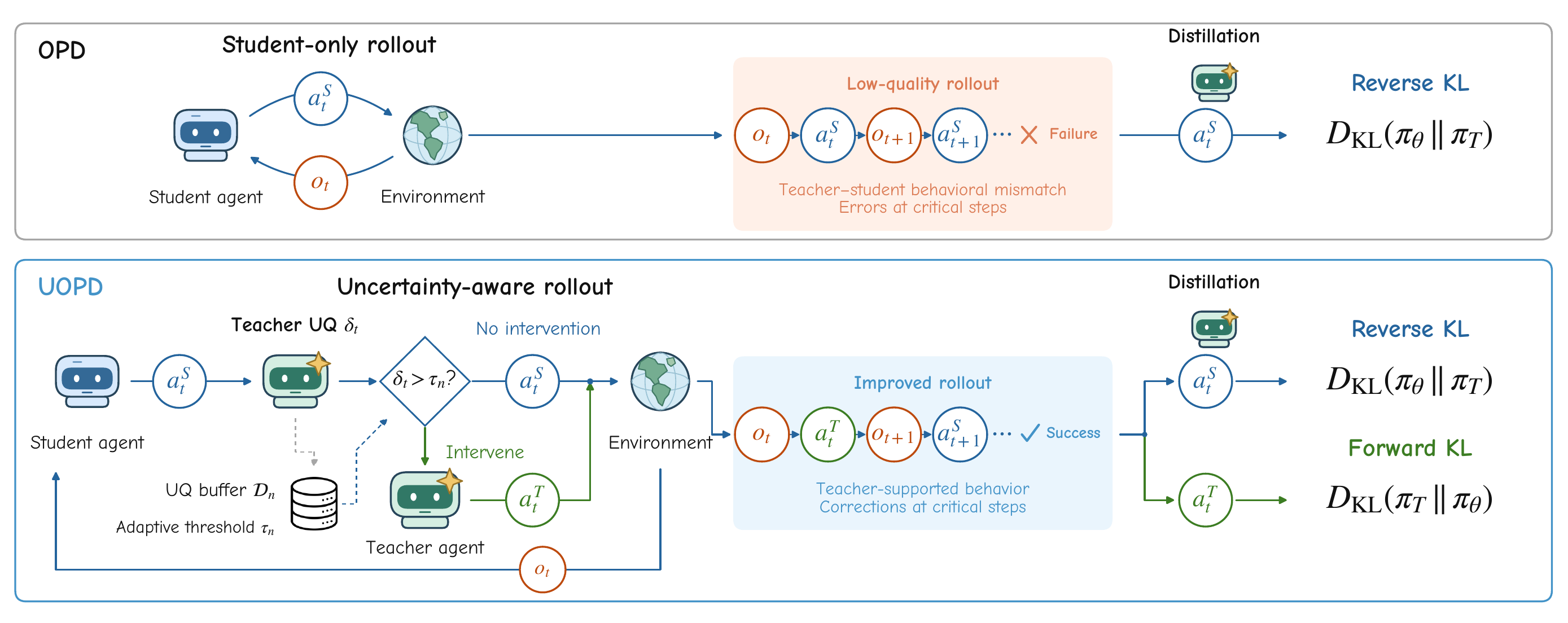}
\caption{\textbf{OPD versus UOPD.} OPD executes student actions ($a^S_t$) given observations ($o_t$) and distills using reverse KL. UOPD uses teacher signals for uncertainty quantification (UQ) and an adaptive threshold to determine when the teacher intervenes. This decision jointly determines the executed action and the learning objective: reverse KL for student actions, or SFT  on teacher actions ($a^T_t$). The latter minimizes forward KL in expectation.}
\label{fig:opd_comparison}
\vspace{-3mm}
\end{figure}

Distilling a large teacher into a small student has become a standard stage of language-model post-training \citep{hinton2015distilling,agarwal2024policy}. Among its variants, \emph{on-policy distillation} (OPD) has emerged as the dominant recipe: rather than imitating a fixed corpus of teacher trajectories, the student samples its own rollouts and the teacher supplies a dense, token-level target at every state the student actually visits \citep{agarwal2024policy}. This removes the exposure bias of supervised distillation and, by minimizing a reverse Kullback–Leibler (KL) under the student's own distribution, gives the student a mode-seeking target that it has the capacity to match.  Recent work has extended OPD to multi-turn agents, which gather
  information, use tools, and act on successive observations to accomplish
  complex tasks \citep{wang2026exploring,sod2026,sdar2026,sageopd2026}.
  Distilling these capabilities into smaller models is particularly
  valuable for efficient deployment, as completing a single task often
  requires many model calls.

In multi-turn settings, each action shapes subsequent observations and decisions. Student errors can therefore compound across turns, degrading trajectory quality and teacher supervision quality \citep{wang2026exploring,sod2026,manyfaces2026}. Standard OPD can reduce the probability of sampled poor actions through distillation, but does not directly provide better actions for the student to imitate. These poor actions are still executed during rollout collection and can lead to repeated mistakes and task failure. Existing approaches mitigate multi-turn drift by reweighting teacher supervision \citep{sod2026,sageopd2026}, adapting rollout length \citep{pruneopd2026,wang2026exploring}, or using teacher-generated prefixes \citep{wang2026exploring,reopd2026}. However, these approaches do not directly correct unreliable student
  actions at the turns where they arise. Executing such actions can lead
  to further mistakes and task failure, reducing the quality of training
  trajectories. 
  
  To address this gap, we consider \textit{intervention}: replacing selected student actions before execution. As the teacher is generally more capable than the student, it can supply a replacement action for such an intervention, providing explicit corrective targets while guiding the subsequent rollout. We focus on deciding when such intervention is needed. The student may generate actions that elicit high uncertainty from the teacher. This uncertainty offers a potential signal for identifying steps where a teacher correction could be useful.   Our controlled study in Section~\ref{sec:motivation} shows that a single
  teacher correction yields a larger improvement in task success when
  applied at a high-uncertainty turn than at a randomly selected turn.
  These results support using teacher uncertainty to select turns
  for intervention.

Motivated by these findings, we propose \textbf{Uncertainty-Aware Intervention for On-Policy Distillation (UOPD)} (Fig.~\ref{fig:opd_comparison}). At each turn, the student proposes an action, and the teacher evaluates its uncertainty on that action. At low-uncertainty turns, UOPD executes the student action and trains with reverse KL. At high-uncertainty turns, it samples and executes a teacher action and trains the student to imitate it through supervised fine-tuning (SFT), which minimizes forward KL in expectation. Teacher intervention thus supplies an action that the student may rarely generate on its own and lets the student continue interacting from the resulting observation. UOPD uses uncertainty to decide when to learn from the student's own actions and when to introduce a teacher correction.

Our \textbf{contributions} are summarized as follows:

$\bullet$  We propose \textbf{UOPD}, which uses teacher uncertainty to guide
  action execution and supervision in multi-turn on-policy distillation.
  UOPD combines reverse-KL learning on student actions with SFT on executed
  teacher corrections, providing corrective targets and improving
  trajectory quality.\\
$\bullet$ We provide a theoretical analysis of the two roles of teacher corrections.
We derive a rollout return improvement guarantee under average recoverability,
and show that teacher-action imitation can improve student return even when
reverse-KL supervision reduces it.\\
$\bullet$  Experiments on ALFWorld, WebShop, and multi-hop Search demonstrate that UOPD improves task performance over OPD and other state-of-the-art baselines across student model sizes while requiring fewer interaction turns, indicating the effectiveness and efficiency of our method.
\section{Related Work}
\label{sec:related}

\textbf{On-policy distillation.}
On-policy distillation trains a student on its own generations while querying a teacher for dense token-level supervision \citep{agarwal2024policy}. Recent work connects this objective to policy-gradient learning \citep{vopd2026,exopd2026}, organizes distillation methods by their rollout source and divergence direction \citep{decoupled2026}, and studies how the choice of divergence affects policy improvement \citep{distil2026}. Other analyses show that the effectiveness of OPD depends strongly on student--teacher support overlap and that teacher supervision can become unreliable on student-generated prefixes \citep{manyfaces2026,rethinking2026}. Several methods address this issue at the token level by adapting the KL direction \citep{aopd2026,hpd2026}, selecting supported tokens \citep{taopd2026}, or mixing teacher and student prefixes \citep{xu2025skd,brts2026}. Orthogonal to where supervision is applied, a further group of variants changes what the teacher provides: rubrics or preference pairs induced from teacher text stand in for logits when the teacher is black-box \citep{ropd2026,orpodistill2025}, privileged context conditions the teacher's scores \citep{mopd2026}, and another line couples teacher supervision with an explicit task reward inside a single objective \citep{kdrl2025,rlad2026,sparse2dense2026,chord2026}, following earlier on-policy policy distillation for control \citep{ppd2025}. Teacher uncertainty is also used to adapt token-level distillation objectives \citep{eopd2026,egrsd2026}, while other work obtains corrective supervision by probing alternative continuations or backtracking to an earlier reasoning prefix \citep{spot2026,motab2026}. UOPD selects corrections during environment interaction, where executing a corrected action also changes the states visited next.

\textbf{Multi-turn agent distillation and intervention.}
In multi-turn environments, errors alter subsequent observations and cause distribution shift to accumulate across turns \citep{wang2026exploring,sod2026,sageopd2026}. Existing approaches mitigate this drift by reweighting or filtering supervision \citep{sod2026,pruneopd2026,sdar2026}, limiting or adapting rollout exposure \citep{wang2026exploring,fullrollouts2026,adwin2026,turnopd2026}, or modifying the roll-in trajectory through teacher prefixes, refinement, or lookahead feedback \citep{reopd2026,trd2026,lgr2026,serl2026}. SAGE-OPD \citep{sageopd2026} uses teacher judgments and confidence to scale turn-level supervision. Teacher actions also guide rollouts through scheduled turn mixing \citep{guidedopd2026} or corrections selected and validated after rollout collection \citep{futurebridgeopd2026}. UOPD uses teacher uncertainty to target critical steps and intervene before the student's proposed action is executed, providing a corrective learning target and redirecting the subsequent rollout. UOPD also connects to DAgger-style expert supervision at learner-visited states \citep{ross2011dagger,distil2026} and HG-DAgger's expert takeover \citep{kelly2019hgdagger}. It automates takeover using teacher uncertainty on student actions and trains the student on the executed corrections.

\section{Why Is Intervention Needed Along the Student's Rollout?}
\label{sec:motivation}

\begin{figure}[!ht]
\centering
\captionsetup[subfigure]{font=scriptsize,skip=2pt}
\begin{subfigure}[t]{0.24\linewidth}
  \centering
  \includegraphics[width=\linewidth]{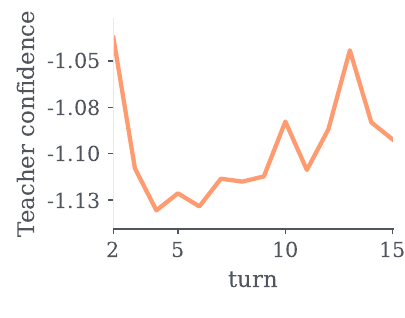}
  \caption{Student rollout}
\end{subfigure}\hfill
\begin{subfigure}[t]{0.24\linewidth}
  \centering
  \includegraphics[width=\linewidth]{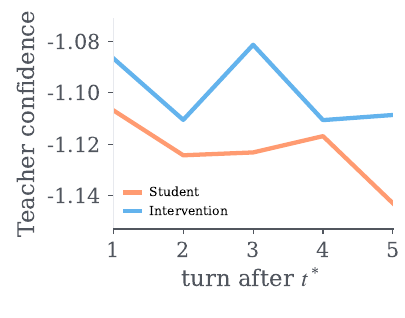}
  \caption{After intervention}
\end{subfigure}\hfill
\begin{subfigure}[t]{0.24\linewidth}
  \centering
  \includegraphics[width=\linewidth]{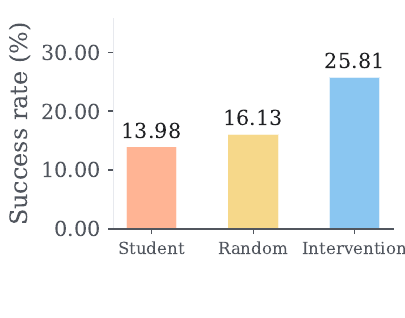}
  \caption{Intervention benefit}
\end{subfigure}\hfill
\begin{subfigure}[t]{0.24\linewidth}
  \centering
  \includegraphics[width=\linewidth]{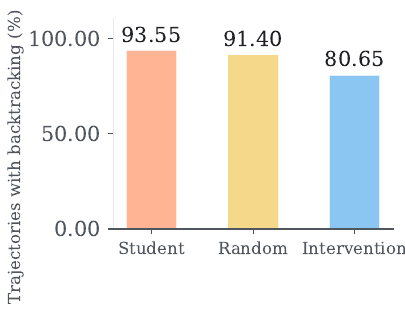}
  \caption{Backtracking behavior}
\end{subfigure}

\caption{\textbf{One teacher intervention at a low-confidence step improves rollout quality.} (a) Teacher confidence fluctuates across turns of student-only rollouts. (b) Replacing one low-confidence action with a teacher action raises teacher confidence on subsequent actions generated by the student. (c)(d) This intervention increases task success and reduces backtracking compared with both student-only rollouts and random intervention. These results support using teacher uncertainty to select steps where a correction improves the rest of the rollout.}
\label{fig:motivation}
\end{figure}

 A poor student action can lead to further mistakes, affecting both task
  completion and the trajectory used for distillation. Standard OPD provides
  teacher supervision along this trajectory, but does not prevent the action
  from being executed. Correcting the action before execution can instead
  allow the student to continue from a better state and learn from the
  resulting trajectory. We therefore study whether teacher uncertainty
  identifies useful intervention points and whether a single teacher
  correction improves the student's subsequent rollout and task performance.

\noindent{\textbf{Experimental Setup.}}
We conduct a controlled study on ALFWorld \citep{shridhar2021alfworld}, keeping both models fixed. We use teacher confidence on student-generated actions to identify high-uncertainty steps. On the same tasks, we compare student-only rollouts, intervention at a low-confidence step, and intervention at a random step. Both intervention conditions replace exactly one student action with a teacher action, then return control to the student under the same turn limit. Appendix~\ref{app:motivation_details} provides the full protocol.

\noindent{\textbf{Observation 1: uncertainty varies across decision steps.}}
Teacher confidence falls and recovers along student rollouts (Fig.~\ref{fig:motivation}(a)), showing that the teacher's support for student-generated actions changes within an interaction. Low-confidence actions are weakly supported by the teacher and provide candidates for correction. In a multi-turn environment, executing such an action also determines the observations and decisions that follow, so its consequences can extend beyond the current turn. This motivates evaluating the student's proposed action in its current interaction history to decide when intervention may be useful.

\noindent{\textbf{Observation 2: one correction improves subsequent student behavior.}}
After intervention, the teacher assigns higher confidence to subsequent student-generated actions (Fig.~\ref{fig:motivation}(b)). The teacher acts only once, and the student resumes without any parameter update: the benefit extends to actions the student generates on its own after the correction. Task success also rises from approximately $14\%$ to $26\%$, compared with $16\%$ for random intervention (Fig.~\ref{fig:motivation}(c)). The two intervention conditions use the same amount of teacher assistance, yet selecting a low-confidence step produces a larger gain. These results show that both the action correction and its timing matter for the subsequent rollout.

\noindent{\textbf{Observation 3: intervention helps correct student errors.}}
Student errors can alter subsequent interaction and lead to behavior that fails to advance the task. We examine this effect in ALFWorld using returns to previously visited states as a task-specific behavioral indicator. Fig.~\ref{fig:motivation}(d) shows that low-confidence intervention reduces the fraction of trajectories that leave a state and later return to it, relative to both student-only rollouts and random intervention. Together with higher task success, this change suggests that a teacher correction can help the student recover from mistakes and make progress toward completing the task. A well-placed teacher correction can thus improve trajectory quality by changing how the student continues to act.

\noindent{\textbf{Takeaway.}}
Teacher correction at a high-uncertainty step improves subsequent student behavior and task success. Its advantage over random intervention shows that correction timing matters, motivating teacher uncertainty as a signal for deciding when to intervene.

\section{Uncertainty-Aware Intervention for On-Policy Distillation}
\label{sec:method}

\subsection{Notation and Preliminaries}
\label{subsec:preliminary}

We consider a multi-turn interaction starting from a task input $q$. Before turn $t$, the agent conditions on the interaction history $h_t=(q,a_0,o_1,\ldots,a_{t-1},o_t)$, with $h_0=q$. A policy $\pi$ samples an action response $a_t\sim\pi(\cdot\mid h_t)$. Executing the action in the environment produces an observation $o_{t+1}$ and updates the history to $h_{t+1}=(h_t,a_t,o_{t+1})$. A trajectory of $H$ turns is $\tau=(q,a_0,o_1,\ldots,a_{H-1},o_H)$.

Let $\pi_\theta$ be the student policy and $\pi_T$ a frozen teacher. In standard OPD, the student generates every action, $a_t\sim\pi_\theta(\cdot\mid h_t)$. The teacher provides dense token-level supervision on the resulting rollouts. The OPD objective minimizes the reverse Kullback--Leibler (KL) divergence from student to teacher at the visited histories:
\begin{equation}
\mathcal L_{\mathrm{OPD}}(\theta)
=\mathbb E_{\tau\sim\pi_\theta}\!\left[
\sum_{t=0}^{H-1}
\mathrm{KL}\!\left(
\pi_\theta(\cdot\mid h_t)
\,\middle\|\,
\pi_T(\cdot\mid h_t)
\right)
\right].
\label{eq:opd_loss}
\end{equation}
For a sampled student action $a_t$, we write $\ell_t^{\mathrm{OPD}}(a_t;\theta):=\log\pi_\theta(a_t\mid h_t)-\log\pi_T(a_t\mid h_t)$. Its expectation over $a_t\sim\pi_\theta(\cdot\mid h_t)$ equals the KL divergence at $h_t$ in Eq.~\ref{eq:opd_loss}.

Motivated by the observations in Section~\ref{sec:motivation}, UOPD selectively replaces student actions with teacher corrections during rollout collection.

\subsection{UOPD: Uncertainty-Aware Intervention for On-Policy Distillation}
\label{subsec:said}

UOPD consists of two components: an intervention rule that decides when to replace a student action with a teacher action, and a loss that specifies how the student learns at each turn.

  \noindent{\textbf{Intervention Rule.}}
  Let $\mathcal U(\pi_T,\pi_\theta;h,a)\in\mathbb R$ be an uncertainty
  score for action $a$ given history $h$, with larger values indicating
  greater uncertainty. The uncertainty score can be instantiated using
  teacher confidence or a student--teacher confidence gap; we provide
  the specific definitions in Appendix~\ref{app:training-config}.

  Let $n$ index batches of episodes collected during training,
  and let $r_n$ be the target intervention rate for batch $n$.
  Given a buffer $\mathcal D_n$ of recent uncertainty scores of student
  actions, UOPD sets the intervention threshold to
  $\tau_n=\operatorname{Quantile}_{1-r_n}(\mathcal D_n)$.
  The target rate controls how much the teacher
  intervenes, while uncertainty determines where it intervenes.
  We recompute the threshold at the start of each episode and hold it
  fixed within the episode.
  This design connects to uncertainty-guided expert intervention
  \citep{menda2019ensembledagger} and budget-aware quantile thresholding
  \citep{hoque2022thriftydagger} in interactive imitation learning.

  At turn $t$, the student samples a proposed action
  $a_t^S\sim\pi_\theta(\cdot\mid h_t)$ and UOPD computes its uncertainty
  score $\delta_t=\mathcal U(\pi_T,\pi_\theta;h_t,a_t^S)$.
  If $\delta_t>\tau_n$, UOPD samples a teacher action
  $a_t^T\sim\pi_T(\cdot\mid h_t)$ and executes it in place of $a_t^S$;
  otherwise, it executes the student action. The executed action $a_t$
  is appended to the history along with the resulting observation.

  We gradually decrease the target intervention rate to provide more
  teacher guidance early in training and progressively shift rollout
  control to the student:
  \begin{equation}
  r_n=r_{\mathrm{start}}
  +(r_{\mathrm{end}}-r_{\mathrm{start}})
  \min\!\left(\frac{n}{N_{\mathrm{decay}}},1\right),
  \label{eq:intervention_schedule}
  \end{equation}
where $r_{\mathrm{start}}$ and $r_{\mathrm{end}}$ denote the initial and final target rates, and
  $N_{\mathrm{decay}}$ denotes the decay duration.

\begin{algorithm}[t]
\caption{UOPD}
\label{alg:said}
{\bfseries Input:} student $\pi_\theta$, frozen teacher $\pi_T$, environment $E$, uncertainty score $\mathcal U$, uncertainty score buffer $\mathcal D_0$, rate parameters $(r_{\mathrm{start}},r_{\mathrm{end}},N_{\mathrm{decay}})$, SFT weight $\beta$.
\begin{algorithmic}[1]
\FOR{each training iteration $n=0,1,\ldots$}
  \STATE Compute target intervention rate $r_n$ using Eq.~\ref{eq:intervention_schedule}.
  \FOR{each episode in the rollout batch, in parallel}
    \STATE Set $\tau_n\gets\operatorname{Quantile}_{1-r_n}(\mathcal D_n)$; hold it fixed for this episode.
    \STATE Reset $E$ for task input $q$; set $h_0\gets q$ and $t\gets0$.
    \WHILE{the episode is not terminated}
      \STATE Sample $a_t^S\sim\pi_\theta(\cdot\mid h_t)$; compute and record $\delta_t=\mathcal U(\pi_T,\pi_\theta;h_t,a_t^S)$.
      \STATE If $\delta_t>\tau_n$, sample $a_t^T\sim\pi_T(\cdot\mid h_t)$ and set $a_t\gets a_t^T$; otherwise set $a_t\gets a_t^S$.
      \STATE Compute $\ell_t^{\mathrm{UOPD}}$ using Eq.~\ref{eq:said_step}.
      \STATE Execute $a_t$, observe $o_{t+1}$, and set $h_{t+1}\gets(h_t,a_t,o_{t+1})$; $t\gets t+1$.
    \ENDWHILE
    \STATE Update $\mathcal D_n$ with this episode's recorded uncertainty scores.
  \ENDFOR
  \STATE Update the student $\pi_\theta$ using the collected batch; carry the updated buffer to $\mathcal D_{n+1}$.
\ENDFOR
\end{algorithmic}
\end{algorithm}

\noindent{\textbf{Intervention-Conditioned Distillation.}} 
At a non-intervened turn, the student receives the standard OPD supervision on $a_t^S$. At an intervened turn, the executed action $a_t^T$ is sampled from the teacher. The standard OPD term relies on student-policy samples and therefore cannot be applied directly to $a_t^T$: at a fixed history $h_t$, its expectation under teacher sampling becomes
\begin{equation}
\mathbb E_{a_t^T\sim\pi_T(\cdot\mid h_t)}\!\left[
\log\frac{\pi_\theta(a_t^T\mid h_t)}{\pi_T(a_t^T\mid h_t)}
\right]
=-\mathrm{KL}\!\left(\pi_T(\cdot\mid h_t)\,\middle\|\,\pi_\theta(\cdot\mid h_t)\right).
\label{eq:teacher_sampled_log_ratio}
\end{equation}
We instead minimize the \emph{forward} KL from teacher to student at intervened turns through SFT, since
\begin{equation*}
\mathbb E_{a_t^T\sim\pi_T(\cdot\mid h_t)}[-\log\pi_\theta(a_t^T\mid h_t)]
=\mathcal{H}\!\left(\pi_T(\cdot\mid h_t)\right)
+\mathrm{KL}\!\left(\pi_T(\cdot\mid h_t)\,\middle\|\,\pi_\theta(\cdot\mid h_t)\right).
\end{equation*}
  Here $\mathcal H(p):=-\mathbb E_{a\sim p}[\log p(a)]$ denotes the entropy
  of an action distribution $p$. The teacher entropy is independent of $\theta$, so SFT
  and forward KL have the same expected gradient. Combining OPD on student
  actions with SFT on teacher actions yields
\begin{equation}
\ell_t^{\mathrm{UOPD}}=
\begin{cases}
\ell_t^{\mathrm{OPD}}(a_t^S;\theta),
& \delta_t\le\tau_n,\\[3pt]
-\beta\log\pi_\theta(a_t^T\mid h_t),
& \delta_t>\tau_n,
\end{cases}
\label{eq:said_step}
\end{equation}
  where $\beta$ controls the weight of the teacher-action SFT loss.
  The intervention decision thus determines both the executed action
  and the KL direction at each turn.

  Appendix~\ref{app:mode_balance} illustrates the
  mode-seeking and mode-covering tendencies of reverse and forward KL,
  respectively.

\subsection{Understanding Intervention and Imitation}
\label{sec:theory}

The intervention mechanism in Section~\ref{subsec:said} gives each teacher correction two roles: an action to execute and a target to imitate. We first characterize how executing selected corrections changes rollout return. We then show how teacher-action imitation can improve the student's decision at a step where reverse-KL supervision would reduce return.

\subsubsection{Improving Rollout Quality}
\label{sec:theory_execution}

We measure rollout quality by the expected episode return $J(\pi)$ and write $\pi_S:=\pi_\theta$ for the current student. During collection, fix $\pi_S$ and the intervention threshold $\tau_n$. The student proposes $a_t^S\sim\pi_S(\cdot\mid h_t)$, and $I_t=\mathbf{1}\{\mathcal U(\pi_T,\pi_S;h_t,a_t^S)>\tau_n\}$ determines whether a teacher action $a_t^T\sim\pi_T(\cdot\mid h_t)$ replaces it. Let $\pi_G$ denote this assisted execution policy and $N_{\mathrm{int}}:=\sum_t I_t$. The student and assisted policies interact with the same environment and initial task distribution. Write $Q_t^{\pi_S}(h,a)$ for the expected remaining return from taking $a$ at $(t,h)$ and then following the student. Expectations under $\pi_G$ include the student proposals as well as the executed actions.

\begin{assumption}[Average recoverability]
\label{ass:average_recoverability}
Whenever $\mathbb E_{\pi_G}[N_{\mathrm{int}}]>0$, teacher actions have average advantage of at least $\gamma>0$ at the selected turns:
\begin{equation}
\frac{
\mathbb E_{\pi_G}\!\left[\sum_t I_t
\bigl(Q_t^{\pi_S}(h_t,a_t^T)-Q_t^{\pi_S}(h_t,a_t^S)\bigr)\right]
}{\mathbb E_{\pi_G}[N_{\mathrm{int}}]}
\ge \gamma.
\label{eq:average_recoverability}
\end{equation}
\end{assumption}

This condition averages over the selected turns and the histories reached by $\pi_G$; it does not require every teacher correction to improve return.

\begin{theorem}[Return improvement under selective intervention]
\label{thm:return_decomposition}
For any uncertainty intervention rule and teacher policy in a finite-horizon environment,
\begin{equation}
J(\pi_G)-J(\pi_S)
=\mathbb E_{\pi_G}\!\left[
\sum_t I_t
\bigl(Q_t^{\pi_S}(h_t,a_t^T)-Q_t^{\pi_S}(h_t,a_t^S)\bigr)
\right].
\label{eq:return_decomposition}
\end{equation}
Under Assumption~\ref{ass:average_recoverability}, this identity gives
\begin{equation}
J(\pi_G)-J(\pi_S)
\ge \gamma\,\mathbb E_{\pi_G}[N_{\mathrm{int}}]\ge0.
\label{eq:return_lower_bound}
\end{equation}
\end{theorem}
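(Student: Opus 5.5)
The plan is to prove the identity \eqref{eq:return_decomposition} with a performance-difference (telescoping) argument, where the comparison policy's value functions are those of the student $\pi_S$ and the trajectory distribution is that of the assisted policy $\pi_G$. For finite horizon $H$, write $V_t^{\pi_S}(h)=\mathbb E_{a\sim\pi_S(\cdot\mid h)}[Q_t^{\pi_S}(h,a)]$ and use $V_H^{\pi_S}\equiv 0$. The expected return is $J(\pi_S)=\mathbb E_q[V_0^{\pi_S}(q)]$, and $J(\pi_G)=\mathbb E_{\pi_G}[\sum_t R_t]$, where $R_t$ is the reward collected after the executed action $a_t$. Both policies share the initial task distribution, so
\begin{equation*}
J(\pi_G)-J(\pi_S)=\mathbb E_{\pi_G}\!\left[\sum_{t=0}^{H-1}\bigl(R_t+V_{t+1}^{\pi_S}(h_{t+1})-V_t^{\pi_S}(h_t)\bigr)\right],
\end{equation*}
because the intermediate value terms telescope. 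This uses only the boundary condition $V_H^{\pi_S}=0$ and that $h_0=q$ has the same law under both policies.

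Next I condition each summand on $(h_t,a_t^S,a_t^T,I_t)$, i.e.\ on everything up to the executed action $a_t$. The environment dynamics do not depend on which policy produced $a_t$. The tower property together with the Bellman equation for $\pi_S$ then gives $\mathbb E[R_t+V_{t+1}^{\pi_S}(h_{t+1})\mid h_t,a_t]=Q_t^{\pi_S}(h_t,a_t)$. Writing the executed action as $a_t=I_t a_t^T+(1-I_t)a_t^S$, each term becomes
\begin{equation*}
Q_t^{\pi_S}(h_t,a_t)-V_t^{\pi_S}(h_t)=I_t\bigl(Q_t^{\pi_S}(h_t,a_t^T)-Q_t^{\pi_S}(h_t,a_t^S)\bigr)+\bigl(Q_t^{\pi_S}(h_t,a_t^S)-V_t^{\pi_S}(h_t)\bigr).
\end{equation*}

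The main obstacle is to show that the last bracket has zero expectation under $\pi_G$, even though $I_t$ is correlated with $a_t^S$. The key observation is that under $\pi_G$ the proposal $a_t^S$ is always drawn from $\pi_S(\cdot\mid h_t)$, conditionally on $h_t$. The history $h_t$ depends only on earlier proposals and executed actions, not on $a_t^S$. Hence
\begin{equation*}
\mathbb E_{\pi_G}\!\left[Q_t^{\pi_S}(h_t,a_t^S)-V_t^{\pi_S}(h_t)\,\middle|\,h_t\right]=0.
\end{equation*}
This holds regardless of the selection rule, since $I_t$ enters only the first bracket. This is exactly why the statement counts the student proposals inside expectations under $\pi_G$. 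One must also check that the teacher draw $a_t^T$ is independent of $a_t^S$ given $h_t$, so that conditioning on $h_t$ suffices. This gives \eqref{eq:return_decomposition} for any rule and any teacher.

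Finally, the bound \eqref{eq:return_lower_bound} follows directly. If $\mathbb E_{\pi_G}[N_{\mathrm{int}}]=0$, then $I_t=0$ almost surely and both sides vanish. Otherwise, Assumption~\ref{ass:average_recoverability} says the right-hand side of \eqref{eq:return_decomposition} is at least $\gamma\,\mathbb E_{\pi_G}[N_{\mathrm{int}}]$, and this is $\ge 0$ since $\gamma>0$ and $N_{\mathrm{int}}\ge0$. Finite horizon and bounded rewards keep all expectations finite, which justifies the telescoping and the exchange of sum and expectation.
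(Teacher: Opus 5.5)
Your proposal is correct and follows essentially the same route as the paper's proof. Both use a finite-horizon performance-difference telescoping with the student's value functions along $\pi_G$-trajectories, then split $Q_t^{\pi_S}(h_t,a_t)-V_t^{\pi_S}(h_t)$ into $I_t\bigl(Q_t^{\pi_S}(h_t,a_t^T)-Q_t^{\pi_S}(h_t,a_t^S)\bigr)$ plus a student-proposal advantage term that has zero conditional mean because $a_t^S\sim\pi_S(\cdot\mid h_t)$; the independence of $a_t^T$ and $a_t^S$ that you flag is not actually needed, since that vanishing term does not involve $a_t^T$.
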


Theorem~\ref{thm:return_decomposition} relates rollout improvement to the downstream value of the selected corrections. Each term compares a teacher action with the student proposal it replaces, while the expectation accounts for the histories reached after earlier interventions. Under average recoverability, the gain is at least $\gamma$ per intervention in expectation. This is the execution benefit illustrated by Fig.~\ref{fig:motivation}(c): an action correction improves the subsequent rollout before any student update. The proof is in Appendix~\ref{app:proof_return_improvement}.

\subsubsection{Learning from Teacher Corrections}
\label{sec:theory_supervision}

Beyond changing the rollout, intervention supplies a teacher action as the learning target at the current turn. We examine how this changes the student's update at a decision step. Reverse KL can penalize an overproduced action without directing the resulting probability change toward higher-return actions. We show that teacher-action SFT can improve the update direction even when the student can represent the teacher exactly. 
At a fixed turn $t$ and history $h$, let $J_{t,h}(\pi):=\mathbb E_{a\sim\pi(\cdot\mid h)}[Q_t^{\pi_S}(h,a)]$ measure the return of an action distribution, keeping $\pi_S$ as the continuation policy.

\begin{proposition}[Benefit of teacher-action imitation]
\label{prop:corrective_supervision}
There exist a one-step, three-action decision problem with binary terminal rewards, full-support student and teacher policies with $J_{t,h}(\pi_T)>J_{t,h}(\pi_S)$, and an unrestricted softmax student $\pi_\theta(\cdot\mid h)=\operatorname{softmax}(\theta)$ initialized at $\pi_S$. Let $\pi_{\mathrm{OPD}}^+$ and $\pi_{\mathrm{SFT}}^+$ result from one ordinary gradient step on the action logits, starting from the same student and using the same step size $\alpha$, with the population gradients of reverse KL and teacher-action SFT at $h$, respectively. For all sufficiently small $\alpha>0$,
\begin{equation}
J_{t,h}(\pi_{\mathrm{OPD}}^+)
<J_{t,h}(\pi_S)
<J_{t,h}(\pi_{\mathrm{SFT}}^+).
\label{eq:corrective_supervision_separation}
\end{equation}
\end{proposition}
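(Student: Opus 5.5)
The plan is to build an explicit instance and compare the two updates to first order in $\alpha$. Since the example is unrestricted, I will take actions $\{1,2,3\}$ with $Q_t^{\pi_S}(h,\cdot)=(1,0,0)$: action $1$ leads to reward $1$, and actions $2,3$ lead to reward $0$. Then $J_{t,h}(\pi)=\pi_1$. For a softmax student, $\partial \pi_1/\partial\theta_i=\pi_1(\mathbf 1\{i=1\}-\pi_i)$. A gradient step $\theta^+=\theta-\alpha g$ therefore changes the return by
\begin{equation*}
J_{t,h}(\pi^+)-J_{t,h}(\pi_S)=-\alpha\,\pi_1\Bigl(g_1-\textstyle\sum_j\pi_j g_j\Bigr)+O(\alpha^2).
\end{equation*}
If the bracket is nonzero, its sign fixes the sign of the change for all sufficiently small $\alpha>0$. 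This is the only limiting argument needed.

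Next I compute the two population gradients in logit space, writing $p=\pi_T(\cdot\mid h)$ and $L_i=\log(\pi_i/p_i)$. For the reverse KL, $\mathrm{KL}(\mathrm{softmax}(\theta)\|p)$ has gradient $g^{\mathrm{OPD}}_i=\pi_i\bigl(L_i-\mathrm{KL}(\pi\|p)\bigr)$. For teacher-action SFT, the cross-entropy $\mathbb E_{a\sim p}[-\log\pi_a]$ has gradient $g^{\mathrm{SFT}}_i=\pi_i-p_i$. The goal is a pair $(\pi_S,p)$ with the following properties:
\begin{itemize}
\item both policies have full support;
\item $p_1>\pi_1$, so that $J_{t,h}(\pi_T)>J_{t,h}(\pi_S)$;
\item the OPD bracket is positive, so OPD lowers the return;
\item the SFT bracket is negative, so SFT raises the return.
\end{itemize}

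The mechanism for OPD is as follows. Let the student slightly overproduce a tiny-mass bad action $3$ relative to a teacher that nearly excludes it. Then $L_3$ is large and $g^{\mathrm{OPD}}_3>0$, and reverse KL pushes mass off action $3$. Because of the softmax normalization and the $\pi_i$ weighting, this mass flows mostly to the high-mass bad action $2$ rather than to the good action $1$. The candidate I will verify is $\pi_S=(0.2,0.7,0.1)$ and $p=(0.25,0.74,0.01)$, which gives $J_{t,h}(\pi_T)=0.25>0.2$.

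\begin{itemize}
\item \textbf{OPD.} We have $L\approx(-0.223,-0.055,2.303)$ and $\mathrm{KL}\approx0.147$, so $g^{\mathrm{OPD}}\approx(-0.074,-0.141,0.216)$ and $\sum_j\pi_jg_j\approx-0.092$. The bracket is $\approx0.018>0$, so OPD strictly decreases $J_{t,h}$.
\item \textbf{SFT.} We have $g^{\mathrm{SFT}}=(-0.05,-0.04,0.09)$ and $\sum_j\pi_jg_j=-0.029$. The bracket is $-0.021<0$, so SFT strictly increases $J_{t,h}$.
\end{itemize}

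The main obstacle is tuning the numbers so that the reverse-KL bracket is strictly positive. The effect is a weighted competition between the gradient on action $1$ and the $\pi$-average of the gradients, and it is small. If the rounded arithmetic above turns out to be too tight, I will push $p_3$ closer to $0$ to enlarge $L_3$, or increase $\pi_2$. I will then recheck both brackets exactly, either with rational or closed-form logarithms, or by proving the brackets are continuous in $(\pi,p)$ and bounded away from $0$ at the chosen point. Finally, I will make the small-$\alpha$ claim rigorous using smoothness of $\theta\mapsto\mathrm{softmax}(\theta)_1$. A Taylor bound with a bounded second derivative shows that the $O(\alpha^2)$ remainder is dominated by the nonzero linear term once $\alpha$ is below an explicit threshold. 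This yields \eqref{eq:corrective_supervision_separation}.
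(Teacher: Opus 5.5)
Your proposal is correct, and the numbers hold up. For $\pi_S=(0.2,0.7,0.1)$ and $\pi_T=(0.25,0.74,0.01)$, the reverse-KL bracket is exactly $0.228\ln 0.8-0.252\ln\frac{35}{37}+0.024\ln 10\approx 0.0184$. It stays positive even if you drop the positive middle term, since $0.024\ln 10+0.228\ln 0.8>0.0552-0.0511>0$. The SFT bracket is exactly $-0.021$, so no retuning or continuity argument is needed.

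The technique is the paper's: logit gradients $p_i(\ell_i-\bar\ell)$ and $p_i-u_i$, the sign of the first-order return change along $-\alpha G$, and a Taylor remainder for small $\alpha$. The instance, however, is different.

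\textbf{The paper's instance.} It takes rewards $(1,1,0)$ and a \emph{uniform} student. The softmax Jacobian then acts as $\frac13 I$ on zero-sum vectors, and the return derivative reduces to $G_{M,c}/3$. This buys:
\begin{itemize}
\item the closed forms $\frac1{27}\log\frac{u_au_b}{u_c^2}$ and $\frac{v-2/3}{3}$;
\item an explicit open family of teachers;
\item the mechanism cited in the main text, where reverse KL sheds an overproduced \emph{successful} action partly onto the failing one.
\end{itemize}

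\textbf{Your instance.} It has a single rewarded action and a non-uniform student. You therefore need the general expression $-\alpha\pi_1\bigl(g_1-\sum_j\pi_jg_j\bigr)$ and a numerical check at one point. In exchange, it exhibits a different failure. Because the reverse-KL logit gradient is weighted by $\pi_i$, the high-mass failing action $2$ absorbs all the mass removed from action $3$ and also some from action $1$ ($d\pi_1/d\alpha\approx-0.0037$). So $\pi_1$ drops even though the student already underproduces it relative to the teacher ($0.2<0.25$).

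Your phrase ``mostly to the high-mass bad action 2 rather than to the good action 1'' slightly understates this. Action $1$ loses mass too, and that loss is precisely what the positive bracket expresses.
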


\begin{table}[!t]
\centering
\caption{\textbf{Main results on ALFWorld and WebShop.} We report success rate (SR, \%) on ALFWorld, and mean task score, SR (\%) on WebShop. We also include average number of turns. Results are means $\pm$ standard deviations over three evaluation seeds. Bold indicates the best result among distillation methods.}
\label{tab:exp-agentic}
\small
\setlength{\tabcolsep}{3pt}
\renewcommand{\arraystretch}{1.12}
\begin{tabular*}{\textwidth}{@{\extracolsep{\fill}}l cccc ccc@{}}
\toprule
& \multicolumn{4}{c}{\textbf{ALFWorld}} & \multicolumn{3}{c}{\textbf{WebShop}} \\
\cmidrule(lr){2-5}\cmidrule(lr){6-8}
& \multicolumn{2}{c}{Seen} & \multicolumn{2}{c}{Unseen} & \multicolumn{3}{c}{Test} \\
\cmidrule(lr){2-3}\cmidrule(lr){4-5}\cmidrule(lr){6-8}
Method & SR $\uparrow$ & Turns $\downarrow$ & SR $\uparrow$ & Turns $\downarrow$ & Score $\uparrow$ & SR $\uparrow$ & Turns $\downarrow$ \\
\midrule
\multicolumn{8}{c}{\textit{RL-Qwen2.5-7B teacher $\rightarrow$ Qwen2.5-3B student}} \\
\addlinespace[2pt]
Student (zero-shot) & 21.0\,\scriptsize{$\pm$0.8} & 43.8\,\scriptsize{$\pm$0.3} & 15.9\,\scriptsize{$\pm$3.4} & 46.0\,\scriptsize{$\pm$1.1} & 12.2\,\scriptsize{$\pm$2.1} & 1.0\,\scriptsize{$\pm$0.5} & 13.6\,\scriptsize{$\pm$0.3} \\
Teacher (zero-shot) & 97.4\,\scriptsize{$\pm$0.4} & 9.1\,\scriptsize{$\pm$0.1} & 92.8\,\scriptsize{$\pm$0.4} & 12.3\,\scriptsize{$\pm$0.4} & 85.4\,\scriptsize{$\pm$0.6} & 77.9\,\scriptsize{$\pm$1.2} & 6.6\,\scriptsize{$\pm$0.3} \\
\midrule
OPD & 88.1\,\scriptsize{$\pm$1.8} & 14.2\,\scriptsize{$\pm$0.3} & 85.3\,\scriptsize{$\pm$1.6} & 16.3\,\scriptsize{$\pm$0.6} & 77.5\,\scriptsize{$\pm$2.2} & 66.9\,\scriptsize{$\pm$3.0} & 7.4\,\scriptsize{$\pm$0.3} \\
TCOD-F2B & 85.7\,\scriptsize{$\pm$1.2} & 14.2\,\scriptsize{$\pm$0.2} & 87.1\,\scriptsize{$\pm$0.9} & 15.6\,\scriptsize{$\pm$0.7} & 76.4\,\scriptsize{$\pm$1.8} & 64.3\,\scriptsize{$\pm$1.2} & 7.4\,\scriptsize{$\pm$0.2} \\
TCOD-B2F & 86.4\,\scriptsize{$\pm$2.1} & 14.1\,\scriptsize{$\pm$0.6} & 85.6\,\scriptsize{$\pm$1.1} & 15.9\,\scriptsize{$\pm$0.4} & 76.3\,\scriptsize{$\pm$0.5} & 65.1\,\scriptsize{$\pm$2.4} & 7.5\,\scriptsize{$\pm$0.1} \\
FTB-OPD & 87.4\,\scriptsize{$\pm$1.1} & 14.1\,\scriptsize{$\pm$0.3} & 86.6\,\scriptsize{$\pm$2.2} & \textbf{14.7}\,\scriptsize{$\pm$0.7} & 77.3\,\scriptsize{$\pm$0.3} & 69.0\,\scriptsize{$\pm$1.6} & 7.4\,\scriptsize{$\pm$0.1} \\
\textbf{UOPD} & \textbf{90.0}\,\scriptsize{$\pm$0.7} & \textbf{13.6}\,\scriptsize{$\pm$0.3} & \textbf{88.8}\,\scriptsize{$\pm$1.3} & 14.9\,\scriptsize{$\pm$0.8} & \textbf{82.4}\,\scriptsize{$\pm$2.0} & \textbf{69.3}\,\scriptsize{$\pm$2.0} & \textbf{6.5}\,\scriptsize{$\pm$0.1} \\
\midrule
\multicolumn{8}{c}{\textit{RL-Qwen2.5-7B teacher $\rightarrow$ Qwen2.5-1.5B student}} \\
\addlinespace[2pt]
Student (zero-shot) & 6.2\,\scriptsize{$\pm$0.4} & 48.5\,\scriptsize{$\pm$0.3} & 3.7\,\scriptsize{$\pm$0.7} & 49.2\,\scriptsize{$\pm$0.2} & 22.6\,\scriptsize{$\pm$3.8} & 2.3\,\scriptsize{$\pm$2.1} & 10.9\,\scriptsize{$\pm$0.4} \\
Teacher (zero-shot) & 97.4\,\scriptsize{$\pm$0.4} & 9.1\,\scriptsize{$\pm$0.1} & 92.8\,\scriptsize{$\pm$0.4} & 12.3\,\scriptsize{$\pm$0.4} & 85.4\,\scriptsize{$\pm$0.6} & 77.9\,\scriptsize{$\pm$1.2} & 6.6\,\scriptsize{$\pm$0.3} \\
\midrule
OPD & 86.2\,\scriptsize{$\pm$2.9} & 15.2\,\scriptsize{$\pm$1.0} & 84.8\,\scriptsize{$\pm$0.4} & 17.2\,\scriptsize{$\pm$0.9} & 66.3\,\scriptsize{$\pm$0.9} & 52.3\,\scriptsize{$\pm$1.4} & 7.9\,\scriptsize{$\pm$0.4} \\
TCOD-F2B & 86.0\,\scriptsize{$\pm$1.8} & 14.3\,\scriptsize{$\pm$0.8} & 85.1\,\scriptsize{$\pm$2.0} & 16.8\,\scriptsize{$\pm$1.0} & 69.5\,\scriptsize{$\pm$1.2} & 51.8\,\scriptsize{$\pm$2.5} & 7.2\,\scriptsize{$\pm$0.2} \\
TCOD-B2F & 84.5\,\scriptsize{$\pm$0.4} & 15.2\,\scriptsize{$\pm$0.1} & 84.1\,\scriptsize{$\pm$0.9} & 18.0\,\scriptsize{$\pm$0.1} & 68.4\,\scriptsize{$\pm$1.2} & 54.2\,\scriptsize{$\pm$0.5} & 7.6\,\scriptsize{$\pm$0.1} \\
FTB-OPD & 85.2\,\scriptsize{$\pm$1.1} & 15.5\,\scriptsize{$\pm$0.4} & 83.6\,\scriptsize{$\pm$1.3} & 17.7\,\scriptsize{$\pm$0.4} & 67.7\,\scriptsize{$\pm$0.5} & 55.2\,\scriptsize{$\pm$1.2} & 9.4\,\scriptsize{$\pm$0.1} \\
\textbf{UOPD} & \textbf{89.8}\,\scriptsize{$\pm$0.8} & \textbf{13.0}\,\scriptsize{$\pm$0.5} & \textbf{86.6}\,\scriptsize{$\pm$0.7} & \textbf{15.3}\,\scriptsize{$\pm$0.2} & \textbf{76.8}\,\scriptsize{$\pm$1.8} & \textbf{58.6}\,\scriptsize{$\pm$0.0} & \textbf{6.8}\,\scriptsize{$\pm$0.4} \\
\bottomrule
\end{tabular*}
\\[3pt]
\vspace{-5mm}
\end{table}

Proposition~\ref{prop:corrective_supervision} identifies a decision step where the supervision objective determines whether an update improves return. In the construction, reverse KL reduces an overproduced successful action but transfers some of its probability to an unsuccessful action. Teacher-action SFT instead reduces the unsuccessful action's probability. This motivates a teacher imitation target at intervention turns within UOPD, which retains OPD supervision on student-executed actions. The parameterized construction and full proof are in Appendix~\ref{app:proof_corrective_supervision}.

\section{Experiments}
\label{sec:experiments}

\subsection{Experimental Setup}
\label{subsec:exp-setup}

\noindent\textbf{Benchmarks and metrics.} We evaluate on three multi-turn agent benchmarks: ALFWorld \citep{shridhar2021alfworld},
WebShop \citep{yao2022webshop}, and Search \citep{jin2025searchr1}. These benchmarks cover embodied household tasks, web shopping, and search-based question answering, allowing us to assess the effectiveness and efficiency of our uncertainty-guided intervention across different environments. We report success rate (SR) on both the seen and unseen splits of ALFWorld, task score and success rate on WebShop, and exact match on four multi-hop Search datasets. Average interaction turns measure efficiency.
Dataset and evaluation details are provided in Appendix~\ref{app:provenance}.

\noindent\textbf{Baselines.} We compare UOPD with four baselines.
\textbf{OPD} \citep{agarwal2024policy,lu2025onpolicydistillation} applies reverse-KL supervision to student-generated actions throughout a rollout.
\textbf{TCOD-F2B} \citep{wang2026exploring} progressively extends the student's rollout horizon from the initial state.
\textbf{TCOD-B2F} \citep{wang2026exploring} starts student rollouts after expert action prefixes and gradually shortens these prefixes to expose the student to earlier decisions.
\textbf{FTB-OPD} \citep{futurebridgeopd2026} introduces teacher corrections at high-disagreement steps and retains them for distillation when they improve teacher preference over subsequent student continuations.

\noindent\textbf{Training details.} On ALFWorld and WebShop, we use Qwen2.5-3B-Instruct and Qwen2.5-1.5B-Instruct \citep{yang2024qwen25} as
students, with task-specific GiGPO-Qwen2.5-7B-Instruct teachers trained
using reinforcement learning  \citep{feng2025gigpo}.
For Search, we distil a base Qwen2.5-1.5B student from
SearchR1-Qwen2.5-7B-em-ppo \citep{jin2025searchr1}. For all methods, we train for $250$ steps on ALFWorld and $150$ on WebShop with
rollout batch size $16$, and for $300$ steps on Search with batch size $64$. For ALFWorld and WebShop, UOPD uses teacher confidence as its uncertainty signal. More training configurations and implementation details are provided in Appendix~\ref{app:provenance}.

\subsection{Main Results}
\label{subsec:exp-alfworld}
\label{subsec:exp-main}

We evaluate UOPD’s task performance and efficiency on ALFWorld and WebShop (Table~\ref{tab:exp-agentic})
  and multi-hop question answering (Fig.~\ref{fig:search-15b-multihop}).

\noindent\textbf{UOPD improves task performance and interaction efficiency.}
Across both student sizes, UOPD achieves the highest success rates on the ALFWorld seen and
unseen splits and the highest WebShop scores among the compared distillation methods.
The gains on unseen ALFWorld tasks show that the benefit extends to environments beyond those encountered during training,
while the improvements at $1.5$B demonstrate effective transfer across a larger
teacher--student capacity gap. On WebShop, UOPD raises the $1.5$B student's
score from OPD's $66.3$ to $76.8$ and success rate from $52.3\%$ to $58.6\%$, while reducing
average turns from $7.9$ to $6.8$. UOPD requires the fewest interaction turns in five
of the six evaluation settings, demonstrating its efficiency across tasks and student sizes.

\noindent\begin{minipage}[t]{0.49\textwidth}
\vspace{0pt}
\textbf{The gains extend to multi-hop search.}
Multi-hop search requires the model to retrieve
  relevant evidence and combine it across multiple reasoning
  steps. With a $1.5$B student, UOPD consistently outperforms OPD
  on all four datasets, improving performance by $2.67$–$8.00$
  percentage points, demonstrating the
  benefit of uncertainty-guided supervision for search tasks. Meanwhile, the average number of turns decreases from $5.19$ to $3.90$, indicating
  that improved accuracy is accompanied by higher efficiency at inference time.
\end{minipage}\hfill
\begin{minipage}[t]{0.48\textwidth}
\vspace{0pt}
\centering
\captionsetup{type=figure,skip=3pt}
\includegraphics[width=\linewidth]{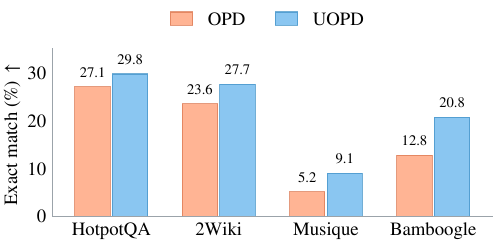}
\caption{\textbf{Multi-hop Search results.}}
\label{fig:search-15b-multihop}
\end{minipage}
\par

\vspace{-2.5mm}
\begin{figure}[!htb]
\centering
\captionsetup{skip=3pt}
\includegraphics[width=\textwidth]{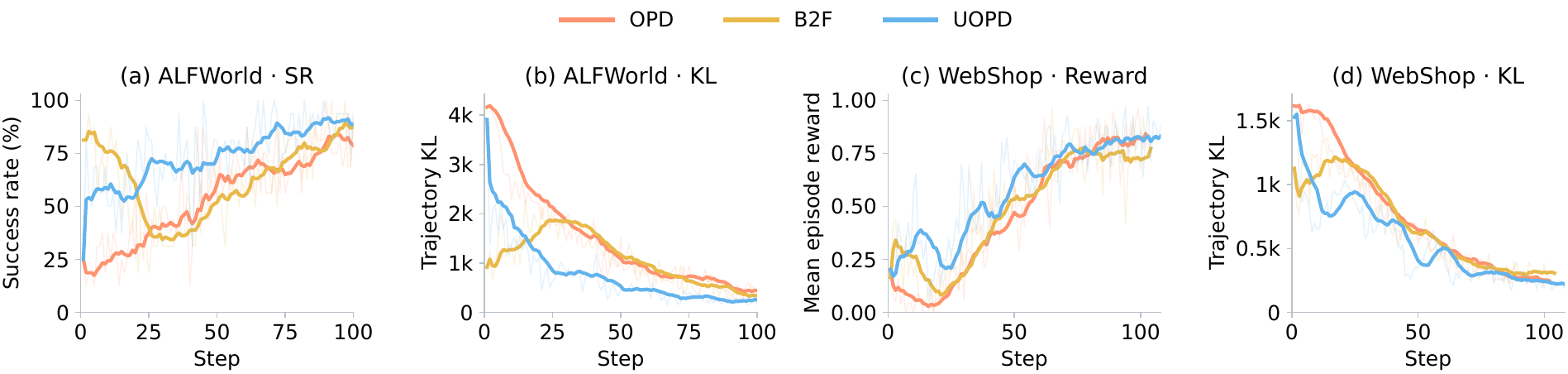}
\caption{\textbf{Training dynamics with 3B students.} We compare OPD, TCOD-B2F (B2F),
and UOPD on ALFWorld (a,b: success rate and teacher--student KL) and WebShop
(c,d: reward and teacher--student KL).}
\label{fig:training-guidance}
\end{figure}

\noindent\textbf{Effective guidance during training.}
  Fig.~\ref{fig:training-guidance} shows that UOPD improves
  rollout quality early in training, achieving higher success
  rates on ALFWorld and higher rewards on WebShop than OPD.
  UOPD also exhibits a faster decline in trajectory-level
  teacher--student KL than OPD and B2F. B2F improves initial
  rollout quality through teacher-executed prefixes, but its
  performance drops as these prefixes shorten. This suggests
  that learning from teacher-provided states may not prepare
  the student to execute longer action sequences independently.
  UOPD targets high-uncertainty
  turns throughout the rollout, providing teacher corrections
  while letting the student act and learn throughout the task.

\subsection{Additional Analysis}
\label{subsec:exp-ablation}

Table~\ref{tab:exp-ablation} presents ablation studies with $3$B students on ALFWorld and WebShop,
examining how individual components of UOPD contribute to task performance and interaction efficiency.

\begin{table}[!ht]
\centering
\caption{\textbf{Effects of intervention selection, SFT weight, teacher execution, and schedule
with a 3B student.} The default UOPD configuration uses teacher confidence,
$\beta=1$, teacher execution, and a $30\%\!\to\!5\%$ intervention schedule. Random intervention
samples each turn with the scheduled probability. Values are means over three evaluation seeds, and bold marks the best value
in each column.}
\label{tab:exp-ablation}
\footnotesize
\setlength{\tabcolsep}{4pt}
\begin{tabular*}{\textwidth}{@{\extracolsep{\fill}}l ccccc@{}}
\toprule
& \multicolumn{2}{c}{ALFWorld} & \multicolumn{3}{c}{WebShop} \\
\cmidrule(lr){2-3}\cmidrule(lr){4-6}
Configuration & Seen SR $\uparrow$ & Unseen SR $\uparrow$ & Score $\uparrow$ & SR $\uparrow$ & Turns $\downarrow$ \\
\midrule
\textbf{UOPD} ($\beta=1$, $30\%\!\to\!5\%$) & 90.0 & 88.8 & \textbf{82.4} & \textbf{69.3} & \textbf{6.5} \\
\midrule
\multicolumn{6}{@{}l}{\emph{Intervention signal}} \\
\quad Confidence gap & 89.8 & 87.8 & 77.3 & 69.0 & 7.2 \\
\quad Random intervention & 88.3 & 83.8 & 77.3 & 64.6 & 7.4 \\
\midrule
\multicolumn{6}{@{}l}{\emph{SFT weight}} \\
\quad $\beta=0$ & 86.2 & 80.8 & 79.7 & 66.1 & 6.7 \\
\quad $\beta=0.5$ & 85.7 & 86.3 & 78.0 & 68.0 & 7.3 \\
\quad $\beta=2$ & 89.5 & 86.8 & 77.3 & 68.0 & 7.5 \\
\midrule
\multicolumn{6}{@{}l}{\emph{Teacher execution}} \\
\quad w/o teacher execution & 89.5 & 84.3 & 77.6 & 69.0 & 7.2 \\
\midrule
\multicolumn{6}{@{}l}{\emph{Intervention schedule}} \\
\quad Constant $15\%$ & 90.0 & \textbf{89.6} & 78.0 & 67.7 & 7.0 \\
\quad $50\%\!\to\!30\%$ & \textbf{90.5} & 86.6 & 69.9 & 59.4 & 8.2 \\
\bottomrule
\end{tabular*}
\vspace{-4mm}
\end{table}

\noindent\textbf{Selecting intervention turns.}
We replace uncertainty-based selection with random intervention, sampling each
turn with the same target-rate schedule while retaining teacher execution and
the SFT loss ($\beta=1$). Compared with random intervention, UOPD improves
unseen ALFWorld success from $83.8\%$ to $88.8\%$ and WebShop score from
$77.3$ to $82.4$, while reducing WebShop turns from $7.4$ to $6.5$.
Using the length-normalized student--teacher log-probability gap as the
intervention signal also yields higher success rates than random selection
on both environments. Teacher confidence achieves comparable success rates
while providing higher task scores and shorter interactions on WebShop.

\noindent\textbf{Learning from teacher corrections.}
Removing the SFT loss reduces performance on both benchmarks, with the largest
success-rate drop of $8.0$ percentage points on unseen ALFWorld. Teacher execution
alone therefore does not recover the full benefit of learning from corrections.
The default $\beta=1$ performs best among the tested weights; increasing it to
$\beta=2$ brings no further gains.

\noindent\textbf{Executing teacher corrections.}
On WebShop, teacher execution increases the score from $77.6$ to $82.4$ and
reduces average turns from $7.2$ to $6.5$, while success rates remain similar.
The benefit thus extends beyond whether a task is fully completed: the student
better satisfies the shopping requirements and uses fewer steps per episode.
This suggests that executing corrections during training can complement the
learning signal provided by the SFT loss.

\noindent\textbf{Allocating teacher intervention.}
A higher intervention rate does not consistently improve performance.
The $50\%\!\to\!30\%$ schedule slightly improves seen ALFWorld success but
lowers unseen success and performs worse on WebShop. A constant $15\%$ rate
remains competitive on ALFWorld, achieving the highest unseen success among
the tested schedules, whereas $30\%\!\to\!5\%$ yields the highest score and
fewest turns on WebShop.
See Appendix~\ref{app:telemetry} for intervention dynamics.

\noindent\begin{minipage}[t]{0.49\textwidth}
\vspace{0pt}
\textbf{Training efficiency.}
Fig.~\ref{fig:training-time-3b} compares training time at equal step counts per benchmark. UOPD reduces
training time relative to FTB-OPD by $13.8\%$ on ALFWorld and $36.5\%$ on WebShop,
with $11.5\%$ and $14.7\%$ overhead over OPD. UOPD selects corrections before action
execution, avoiding the paired student continuations used by FTB-OPD to validate
teacher guidance. It thus improves performance with modest training cost compared with standard on-policy distillation.
\end{minipage}\hfill
\begin{minipage}[t]{0.48\textwidth}
\vspace{0pt}
\centering
\captionsetup{type=figure,skip=3pt}
\includegraphics[width=\linewidth]{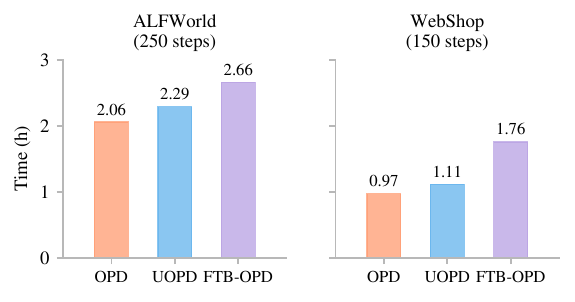}
\caption{\textbf{Training time with 3B students.}}
\label{fig:training-time-3b}
\end{minipage}
\par

\FloatBarrier

\section{Conclusion}
\label{sec:conclusion}

In this work, we investigate when teacher intervention improves the trajectories used for
on-policy distillation of multi-turn agents. Our controlled experiments show that a teacher
correction at a low-confidence step improves subsequent student behavior and task success,
outperforming random intervention. Building on this
finding, we propose UOPD, which uses teacher uncertainty to select turns for action correction
and imitation learning while retaining standard OPD on student actions. Experiments on
ALFWorld, WebShop, and multi-hop Search demonstrate that UOPD improves task
performance over OPD while reducing interaction turns.

Our findings highlight the value of teacher uncertainty for guiding both what an agent learns
and the trajectories from which it learns. Selective corrections provide useful learning targets
and steer subsequent interaction toward better outcomes. Future work could extend this approach
to longer-horizon tasks and adapt the intervention budget to task difficulty and student progress.

\clearpage
\bibliography{references}

\appendix
\newpage
\section{Proofs}
\label{app:theory_proofs}

\subsection{Proof of Theorem~\ref{thm:return_decomposition}}
\label{app:proof_return_improvement}

\begin{proof}
Define the student's finite-horizon value and advantage functions by
\begin{align*}
V_t^{\pi_S}(h)
&:=\mathbb E_{a\sim\pi_S(\cdot\mid h)}Q_t^{\pi_S}(h,a),\\
A_t^{\pi_S}(h,a)
&:=Q_t^{\pi_S}(h,a)-V_t^{\pi_S}(h),
\end{align*}
with $V_H^{\pi_S}=0$.

\paragraph{Step 1: derive the finite-horizon performance-difference identity.}
By the Bellman definition of $Q_t^{\pi_S}$,
\[
Q_t^{\pi_S}(h_t,a_t)
=\mathbb E\!\left[r_t+V_{t+1}^{\pi_S}(h_{t+1})
\mid h_t,a_t\right].
\]
Taking expectation under the assisted trajectory and summing over turns gives
\begin{align*}
\sum_{t=0}^{H-1}\mathbb E_{\pi_G}\!\left[A_t^{\pi_S}(h_t,a_t)\right]
&=\sum_{t=0}^{H-1}\mathbb E_{\pi_G}\!\left[
r_t+V_{t+1}^{\pi_S}(h_{t+1})-V_t^{\pi_S}(h_t)\right]\\
&=J(\pi_G)-J(\pi_S).
\end{align*}
The value terms telescope because the policies share the distribution of initial task inputs and $V_H^{\pi_S}=0$.

\paragraph{Step 2: evaluate the advantage at an intervention turn.}
Condition on $(t,h_t)$ and sample the student proposal and teacher action as in the proposal-coupled construction. Since $a_t=a_t^S$ when $I_t=0$ and $a_t=a_t^T$ when $I_t=1$,
\begin{align*}
&\mathbb E\!\left[A_t^{\pi_S}(h_t,a_t)\mid h_t\right]\\
&=\mathbb E\!\left[
I_tA_t^{\pi_S}(h_t,a_t^T)+(1-I_t)A_t^{\pi_S}(h_t,a_t^S)
\,\middle|\,h_t\right]\\
&=\mathbb E\!\left[
I_t\bigl(A_t^{\pi_S}(h_t,a_t^T)-A_t^{\pi_S}(h_t,a_t^S)\bigr)
\,\middle|\,h_t\right]\\
&=\mathbb E\!\left[
I_t\bigl(Q_t^{\pi_S}(h_t,a_t^T)-Q_t^{\pi_S}(h_t,a_t^S)\bigr)
\,\middle|\,h_t\right].
\end{align*}
The third line uses $\mathbb E[A_t^{\pi_S}(h_t,a_t^S)\mid h_t]=0$; the final line cancels the common value term.

\paragraph{Step 3: aggregate over the assisted trajectory.}
Substituting Step 2 into Step 1 and applying the law of total expectation gives Eq.~\ref{eq:return_decomposition}.

Under Assumption~\ref{ass:average_recoverability}, multiplying
Eq.~\ref{eq:average_recoverability} by
$\mathbb E_{\pi_G}[N_{\mathrm{int}}]$ and applying the theorem yields
\[
J(\pi_G)-J(\pi_S)
\ge\gamma\,\mathbb E_{\pi_G}[N_{\mathrm{int}}]\ge0,
\]
which proves the lower bound in Theorem~\ref{thm:return_decomposition}. If the expected number of
interventions is zero, $\pi_G=\pi_S$ almost surely and the same conclusion
holds directly.
\end{proof}

\subsection{Proof of Proposition~\ref{prop:corrective_supervision}}
\label{app:proof_corrective_supervision}

\begin{proof}
Fix a history $h$ at a terminal decision turn $t$. There are three actions $\{a,b,c\}$ with rewards $r(a)=r(b)=1$ and $r(c)=0$, so $Q_t^{\pi_S}(h,i)=r(i)$. Suppress $h$ in the distributions and use the unrestricted softmax parameterization
\[
p_i(\theta)=\frac{\exp(\theta_i)}{\sum_{j\in\{a,b,c\}}\exp(\theta_j)}.
\]
Initialize $\theta_0=(0,0,0)$, so $p=\pi_S=(1/3,1/3,1/3)$. The three logits are independent parameters. Choose a teacher
\[
u=\pi_T=\bigl(v(1-\epsilon),v\epsilon,1-v\bigr),
\qquad
\frac23<v<1,\quad 0<\epsilon<\frac12,
\]
with
\begin{equation}
v^2\epsilon(1-\epsilon)<(1-v)^2.
\label{eq:corrective_teacher_family}
\end{equation}
For any $v\in(2/3,1)$, this condition holds for sufficiently small positive $\epsilon$. Thus the construction describes a family of full-support teachers, each with return $v>2/3=J_{t,h}(\pi_S)$. Every teacher in this family is represented by the student class, for example by logits $\theta_i=\log u_i$.

Write $\ell_i=\log(p_i/u_i)$ and $\bar\ell=\sum_i p_i\ell_i$. The ordinary logit gradients of the two losses in Proposition~\ref{prop:corrective_supervision} are
\begin{equation}
G_{\mathrm{OPD},i}=p_i(\ell_i-\bar\ell),
\qquad
G_{\mathrm{SFT},i}=p_i-u_i.
\label{eq:corrective_gradients}
\end{equation}
Both gradients sum to zero. The first follows by differentiating $\sum_i p_i\log(p_i/u_i)$ and using $\nabla_\theta\log p_i=e_i-p$; the second is the expected teacher-action SFT gradient. For $M\in\{\mathrm{OPD},\mathrm{SFT}\}$, the updated logits are $\theta_M^+=\theta_0-\alpha G_M$, and $\pi_M^+=\operatorname{softmax}(\theta_M^+)$.

The softmax Jacobian is $F=\operatorname{diag}(p)-pp^\top$. At the uniform initial policy, $F=\frac13 I-\frac19\mathbf1\mathbf1^\top$, hence the first-order probability change under update $M$ is
\[
\left.\frac{d}{d\alpha}\pi_M^+\right|_{\alpha=0}
=-FG_M=-\frac13G_M.
\]
Since the reward vector is $(1,1,0)$ and $\sum_iG_{M,i}=0$, the return derivative equals $G_{M,c}/3$. Therefore
\begin{align}
\left.\frac{d}{d\alpha}J_{t,h}(\pi_{\mathrm{OPD}}^+)\right|_{\alpha=0}
&=\frac1{27}\log\frac{u_a u_b}{u_c^2}
=\frac1{27}\log\frac{v^2\epsilon(1-\epsilon)}{(1-v)^2}<0,\notag\\
\left.\frac{d}{d\alpha}J_{t,h}(\pi_{\mathrm{SFT}}^+)\right|_{\alpha=0}
&=\frac{v-2/3}{3}>0.
\label{eq:corrective_return_derivatives}
\end{align}

Let $d_{\mathrm{OPD}}$ be the negative of the first derivative and $d_{\mathrm{SFT}}$ the second derivative in Eq.~\ref{eq:corrective_return_derivatives}. Both are strictly positive. Smoothness gives
\begin{align*}
J_{t,h}(\pi_{\mathrm{OPD}}^+)
&=J_{t,h}(\pi_S)-d_{\mathrm{OPD}}\alpha+O(\alpha^2),\\
J_{t,h}(\pi_{\mathrm{SFT}}^+)
&=J_{t,h}(\pi_S)+d_{\mathrm{SFT}}\alpha+O(\alpha^2).
\end{align*}
Choose $\alpha_0>0$ small enough that each remainder has magnitude at most $d_M\alpha/2$ for every $0<\alpha\le\alpha_0$. The OPD update then strictly decreases return, while the SFT update strictly increases it, proving Eq.~\ref{eq:corrective_supervision_separation}.

All probabilities are positive, and the teacher advantage and derivative inequalities are strict. By continuity, these inequalities persist under sufficiently small perturbations of the initial logits and teacher probabilities. The comparison concerns the update direction: both population losses admit the representable teacher as a global minimizer.
\end{proof}

\section{Mode Seeking and Mode Covering}
\label{app:mode_balance}

Figure~\ref{fig:mode_balance} illustrates the different optimization tendencies of the two KL directions at a fixed interaction history. Reverse KL emphasizes actions already sampled by the student and favors concentrating probability on teacher-supported modes. Forward KL penalizes low student probability on actions supported by the teacher, encouraging coverage of behaviors the student rarely generates. UOPD applies reverse KL to retained student actions and implements forward KL through SFT on teacher corrections. Executing these corrections also changes subsequent interaction histories; this effect on the rollout comes from action replacement, while SFT provides the learning signal.

\begin{figure}[htbp]
\centering
\includegraphics[width=\linewidth]{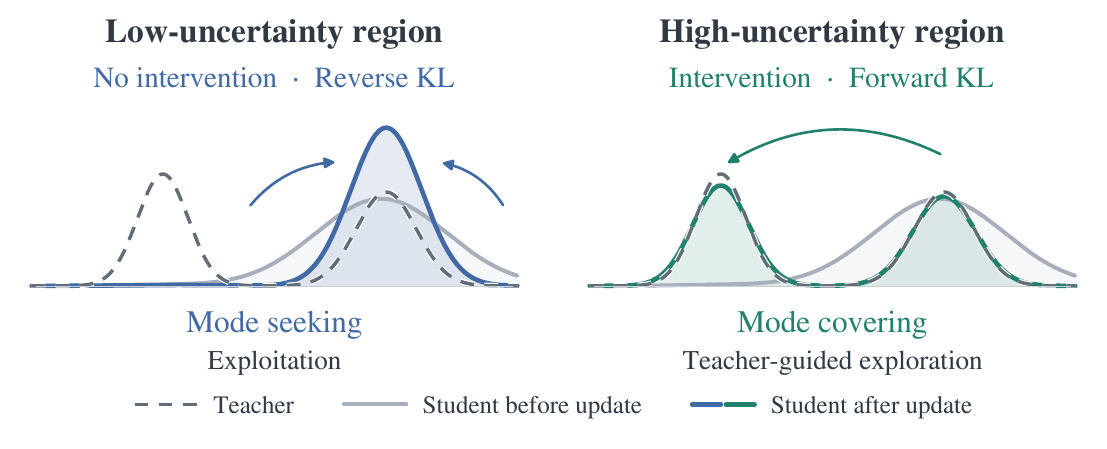}
\caption{\textbf{Mode seeking and mode covering in UOPD.}
Left: reverse KL refines student probability around a teacher-supported mode.
Right: forward KL encourages the student to cover teacher behaviors it rarely generates.
The curves schematically illustrate the two loss directions rather than measured policy distributions.}
\label{fig:mode_balance}
\end{figure}
\FloatBarrier

\section{Details of the ALFWorld Motivation Study}
\label{app:motivation_details}

This section provides the protocol for the controlled study in
Section~\ref{sec:motivation}. We compare student-only rollouts, one teacher
intervention at a low-confidence step, and one teacher intervention at a random
step. Both models remain fixed, so the comparison measures how an action
correction changes subsequent student behavior without a parameter update.

\subsection{Models, Tasks, and Rollout Collection}

We collect student-only rollouts on 100 tasks from the ALFWorld valid-seen split
\citep{shridhar2021alfworld}. The student is Qwen2.5-3B-Instruct, evaluated
zero-shot, and the teacher is GiGPO-Qwen2.5-7B-Instruct-ALFWorld
\citep{feng2025gigpo}. Episodes terminate on task completion or after 50
environment turns. We use sampling temperature $0.4$, a maximum of 512 response
tokens per turn, and seed 42. Each response contains reasoning and an environment
action; the resulting observation is appended to the interaction history.
We save the exact prompt and response token IDs and score the recorded student
responses under the teacher after rollout collection.

\subsection{Teacher Confidence and Intervention Selection}

For a student response $a_t^S=(y_{t,1},\ldots,y_{t,L_t})$ at turn $t$, we define
teacher confidence as the length-normalized log-probability
\begin{equation}
C_t := \frac{1}{L_t}\sum_{j=1}^{L_t}
\log\pi_T(y_{t,j}\mid h_t,y_{t,<j}),
\label{eq:app_teacher_confidence}
\end{equation}
where $h_t$ is the interaction history defined in Section~\ref{subsec:preliminary}. We score all response
tokens, including reasoning and the structured action, and exclude prompt
tokens. Scores are computed from unscaled teacher logits and reported in nats
per token; higher values indicate greater teacher support for the student
response. The corresponding uncertainty score is $\delta_t=-C_t$.

We set $\tau_C$ to the 25th percentile of valid turn-level confidence scores
across the 100 baseline rollouts, giving $\tau_C\approx-1.216$. For trajectory
$i$ with $H_i$ recorded turns, the selected intervention turn is
\begin{equation}
t_i^* := \min\{t:1\le t<H_i-1,\ C_{i,t}<\tau_C\}.
\label{eq:app_motivation_trigger}
\end{equation}
Turn indices are zero-based. This selects the first eligible threshold crossing,
rather than the single lowest-confidence action. We exclude the first and final
baseline turns to retain a replay prefix and subsequent student actions for
comparison. Of the 100 baseline trajectories, 96 meet this criterion and 93 have
verified matching intervention replays. We use these same 93 tasks for all
three comparison conditions.

\subsection{Matched Single-Turn Interventions}

The \textbf{Student} condition uses the original student-only trajectory.
The \textbf{Intervention} condition replaces the action at $t_i^*$ with one
teacher-generated action. The \textbf{Random} condition instead samples one
turn uniformly from $\{1,\ldots,H_i-2\}$ on the same baseline trajectory,
using a task-specific random seed derived from seed 42.

For either intervention condition, we reset the environment to the recorded game
file and execute the saved student actions up to the chosen turn. The teacher
then generates and executes one action, and the student resumes until task
completion or the original 50-turn limit. The intervened rollout therefore
shares the baseline prefix and receives exactly one turn of teacher assistance.
We match trajectories by game file and reconstructed model inputs, and verify
replays against the saved prompts, actions, and final rewards. When Random and
Intervention choose the same turn, they share the saved teacher branch; this
occurs on four tasks, leaving 275 distinct trajectories across the three
conditions.

\subsection{Measurements in Figure~\ref{fig:motivation}}

\paragraph{Teacher confidence.}
Panel (a) averages $C_t$ over baseline trajectories that reach each displayed
turn, using the full set of 100 student-only rollouts. Panel (b) aligns the
matched Student and Intervention trajectories at $t_i^*$ and averages teacher
confidence on student-generated responses at offsets $1,\ldots,5$. The teacher's
replacement action is excluded from this readout. Each mean uses the trajectories
with an available response at that turn; trajectories that have already ended
are not padded. We plot turns with at least 15 observations and connect the
means without smoothing.

\paragraph{Task success.}
Panel (c) reports the percentage of the 93 matched tasks completed successfully
under each condition, using the environment's terminal success indicator.

\paragraph{State returns.}
Panel (d) reports the percentage of trajectories that leave a state and later
return to it. Let $z_0,\ldots,z_H$ denote the symbolic states before the first
action and after each executed action. We identify a state by its exact set of
raw PDDL facts, including predicate names, argument identities and types, and
simulator bookkeeping facts. Observation text, turn numbers, and interaction
history are excluded from this identity. For each trajectory, we compute
\[
I_{\mathrm{return}}(\tau)
=\mathbf{1}\!\left\{\exists\,t\in\{1,\ldots,H\}:
 z_t\ne z_{t-1}\ \text{and}\ z_t\in\{z_0,\ldots,z_{t-1}\}\right\}.
\]
This statistic is computed within each trajectory and covers the full rollout,
including the teacher action where present. Consecutive unchanged states do not
count as a return, and returning does not require repeating the same action.
State returns provide a task-specific indicator of backtracking; a return can
also accompany useful information gathering.
Table~\ref{tab:motivation_protocol} summarizes task success and state returns for all three conditions.

\begin{table}[ht]
\centering
\small
\begin{tabular}{lrr}
\toprule
Condition & Task success (\%) & Trajectories with state returns (\%) \\
\midrule
Student      & 13.98 ($13/93$) & 93.55 ($87/93$) \\
Random       & 16.13 ($15/93$) & 91.40 ($85/93$) \\
Intervention & 25.81 ($24/93$) & 80.65 ($75/93$) \\
\bottomrule
\end{tabular}
\caption{Outcomes on the same 93 ALFWorld tasks used in
Fig.~\ref{fig:motivation}(c,d). Random and Intervention each execute one teacher
action before returning control to the student.}
\label{tab:motivation_protocol}
\end{table}

\section{Experimental Details}
\label{app:provenance}

This section describes the benchmarks and data, the student and teacher models, the baselines, and
the training and evaluation configurations behind the results in Section~\ref{sec:experiments}.
Appendix~\ref{app:prompts} gives the prompts used for training and evaluation.

\subsection{Benchmark Environments}
\label{app:data}

\paragraph{ALFWorld} \citep{shridhar2021alfworld}. Text-based household tasks of six types (pick,
clean, heat, cool, examine, pick-two) in the TextWorld engine. Training draws from the $3{,}553$
training games. We evaluate on the $140$ seen and $134$ unseen validation tasks. A task succeeds when
it is completed within $50$ environment steps, and an episode's turns are its environment steps, with
a failed episode counting the full $50$.

\paragraph{WebShop} \citep{yao2022webshop}. Instruction-following product search and purchase over the
$1{,}000$-product catalog. Training draws from a pool of $6{,}410$ instructions, and evaluation uses
$128$ test instructions that are disjoint from the pool and shared by every method. The task score is
the environment's attribute-match reward on a $0$--$100$ scale, the success rate is the share of
episodes with a perfect match, and the episode cap is $15$ steps.

\paragraph{Search} \citep{jin2025searchr1}. Open-domain question answering with a retrieval tool, in
the Search-R1 setting. Training draws from $169{,}615$ questions ($79{,}168$ from NQ and $90{,}447$
from HotpotQA). The test set is Search-R1's four multi-hop datasets, $22{,}523$ questions in total:
HotpotQA ($7{,}405$), 2WikiMultiHopQA ($12{,}576$), MuSiQue ($2{,}417$) and Bamboogle ($125$). Retrieval uses Search-R1's e5-base-v2 encoder over its 2018 Wikipedia
corpus with a flat FAISS index, returning the top $3$ passages per query. Answers are scored by exact
match after normalisation against any gold answer, using the last \texttt{<answer>} of the episode.
The turn limit is $8$.

\subsection{Students and Teachers}
\label{app:models}

\begin{center}\small
\begin{tabular}{@{}l l l@{}}
\toprule
Benchmark & Students & Teacher \\
\midrule
ALFWorld & Qwen2.5-3B-Instruct, Qwen2.5-1.5B-Instruct & GiGPO-Qwen2.5-7B-Instruct-ALFWorld \\
WebShop & Qwen2.5-3B-Instruct, Qwen2.5-1.5B-Instruct & GiGPO-Qwen2.5-7B-Instruct-WebShop \\
Search & Qwen2.5-1.5B (base) & SearchR1-qwen2.5-7b-em-ppo \\
\bottomrule
\end{tabular}
\end{center}

\noindent The ALFWorld and WebShop teachers are RL-trained with GiGPO \citep{feng2025gigpo}; the Search
teacher is Search-R1's PPO-trained $7$B model, trained from a base checkpoint on Search-R1's raw
prompt, so the Search student is a base checkpoint prompted the same way. In every pair, teacher and
student share a tokenizer, so the teacher can score the student's own tokens.

\subsection{Baselines}
\label{app:baselines}

\paragraph{Zero-shot student and teacher.} Evaluated without training under the protocol of
Appendix~\ref{app:evaluation-protocol}. They bound the results from below and above.

\paragraph{OPD} \citep{agarwal2024policy,lu2025onpolicydistillation}. The student plays the whole
episode, and the teacher scores every student token under the same prompt. Each token receives the
advantage $A_t = \lambda\,(\log\pi_T(y_t) - \log\pi_{S,\mathrm{old}}(y_t))$ with $\lambda=1$, optimised
with a PPO objective (clip $0.2$, dual clip $3.0$, token-mean aggregation). UOPD uses the same
distillation term and adds its intervention rule and the SFT loss on intervened turns.

\paragraph{TCOD-F2B and TCOD-B2F} \citep{wang2026exploring}. Temporal curricula over the rollout horizon,
advanced every $\eta$ rollout batches $n$. F2B lets the student play only the first
$W_n = \min(1+\lfloor n/\eta\rfloor,\,T_{\max})$ turns of an episode and then ends it. B2F replays a
stored expert prefix of $P_n = \mathrm{clamp}(L-1-\lfloor n/\eta\rfloor,\,0,\,L-1)$ actions, where $L$ is
the expert trajectory's length, and the student plays from that point on. The prefix carries no loss.
Both train the student's turns with the OPD loss and are evaluated on the full horizon. On ALFWorld,
$\eta$ is $6$ for F2B and $5$ for B2F.

\paragraph{FTB-OPD} \citep{futurebridgeopd2026}. Introduces teacher corrections at steps where the
student and teacher disagree most, and keeps a correction for distillation only when it improves the
teacher's preference over the student's subsequent continuations. Validating a correction requires
paired student continuations, which is the source of its extra training time in
Figure~\ref{fig:training-time-3b}.

\subsection{Training Configuration}
\label{app:training-config}

We implement UOPD using Trinity-RFT \citep{pan2025trinityrft}.
We use its asynchronous rollout and training
pipeline, with vLLM for student and teacher inference and the verl backend
with fully sharded data parallelism (FSDP) for student
updates. Table~\ref{tab:app-gpu-allocation} gives the eight-GPU allocation.

For ALFWorld and WebShop, the primary UOPD schedules linearly reduce the target intervention
rate from $30\%$ to $5\%$ for the $3$B student and from $50\%$ to $30\%$ for the $1.5$B student.
The intervention threshold is calibrated online using teacher confidence on student-proposed actions.
The SFT weight, teacher execution, and intervention schedules are compared in
Section~\ref{subsec:exp-ablation}. Table~\ref{tab:app-train-hparams} lists the full configuration.

\paragraph{Intervention signals.}
For a student-proposed action $a_t^S$ at history $h_t$, the teacher-confidence
signal is the teacher's length-normalized negative log-likelihood:
\[
\delta_t^{\mathrm{conf}}
=-\frac{\log\pi_T(a_t^S\mid h_t)}{|a_t^S|}.
\]
The student--teacher confidence gap is
\[
\delta_t^{\mathrm{gap}}
=\frac{\log\pi_\theta(a_t^S\mid h_t)-\log\pi_T(a_t^S\mid h_t)}{|a_t^S|}.
\]
Here $|a_t^S|$ is the number of tokens in the action response, and
$\pi_\theta$ is the student policy used to generate the action.
Larger scores indicate lower teacher confidence or a larger student--teacher
confidence gap, respectively; both trigger intervention when they exceed
$\tau_n$.

\begin{table}[!ht]
\centering
\caption{\textbf{Training hyperparameters.} Settings shared by all methods are listed first; the
last block is specific to UOPD.}
\label{tab:app-train-hparams}
\scriptsize
\setlength{\tabcolsep}{4pt}
\begin{tabular}{@{}l c c c@{}}
\toprule
 & ALFWorld & WebShop & Search \\
\midrule
\multicolumn{4}{@{}l}{\emph{Optimisation}} \\
Optimizer & \multicolumn{3}{c}{AdamW, $\beta{=}(0.9, 0.999)$, weight decay $0.01$, constant learning rate, no warm-up} \\
Learning rate & $1\times10^{-6}$ & $1\times10^{-6}$ & $1\times10^{-6}$ \\
Gradient clipping & $1.0$ & $1.0$ & $1.0$ \\
Training steps & $250$ & $150$ & $300$ \\
Tasks per rollout batch & $16$ & $16$ & $64$ \\
Rollouts per task & $1$ & $1$ & $1$ \\
Update epochs / mini-batch & $1$ / $64$ & $1$ / $64$ & $1$ / $64$ \\
Dynamic batching (tokens per GPU) & $16{,}384$ & $16{,}384$ & $9{,}728$ \\
Sampling & \multicolumn{3}{c}{asynchronous, at most $2$ policy versions stale} \\
\midrule
\multicolumn{4}{@{}l}{\emph{Loss}} \\
Distillation (reverse-KL) coefficient $\lambda$ & $1.0$ & $1.0$ & $1.0$  \\
\midrule
\multicolumn{4}{@{}l}{\emph{Rollout}} \\
Max prompt / response tokens & $2{,}048$ / $512$ & $4{,}096$ / $512$ & $4{,}096$ / $512$\\
Sampling temperature & $1.0$ & $1.0$ & $1.0$ \\
Episode limit (steps / turns) & $50$ & $15$ & $8$ \\
History in the prompt & last $2$ steps & last $2$ steps & full episode \\
Retrieval & -- & -- & top $3$ passages \\
\midrule
\multicolumn{4}{@{}l}{\emph{UOPD}} \\
Intervention signal & Teacher confidence & Teacher confidence & Confidence gap \\
Scoring temperature for the signal & $1.0$ & $1.0$ & $1.0$ \\
Intervention schedule, $3$B & $30\%\!\to\!5\%$ over $120$ & $30\%\!\to\!5\%$ over $75$ & -- \\
Intervention schedule, $1.5$B & $50\%\!\to\!30\%$ over $120$ & $50\%\!\to\!30\%$ over $75$ & constant $30\%$ \\
SFT weight $\beta$, form & $1.0$ & $1.0$ & $1.0$\\
\bottomrule
\end{tabular}
\\[3pt]
\end{table}

\begin{table}[!ht]
\centering
\caption{\textbf{Eight-GPU allocation.}
Student rollout, teacher inference, and student updates run on separate GPU groups.}
\label{tab:app-gpu-allocation}
\small
\setlength{\tabcolsep}{5pt}
\begin{tabular}{@{}l c l@{}}
\toprule
Role & GPUs & Configuration \\
\midrule
Student rollout (vLLM) & $4$ & $2$ engines, tensor parallel size $2$ per engine \\
Teacher inference (vLLM) & $2$ & $1$ engine, tensor parallel size $2$ \\
Student updates (verl/FSDP) & $2$ & Ulysses sequence parallel size $2$ \\
\midrule
Total & $8$ & Single node \\
\bottomrule
\end{tabular}
\end{table}

\subsection{Evaluation Protocol}
\label{app:evaluation-protocol}

\paragraph{ALFWorld.} We evaluate household task completion on $140$ seen and $134$ unseen
tasks, reporting the percentage of successfully completed tasks in each split. Average turns
are computed separately for each split over all episodes, including failures.

\paragraph{WebShop.} We use the $1000$-product catalog and $128$ test tasks. Task score measures
how well the selected product satisfies the request; success rate measures full task completion.
Average turns include both successful and failed episodes.
For ALFWorld and WebShop, Table~\ref{tab:exp-agentic} reports means and standard deviations
over three evaluation seeds.

\paragraph{Search.} Figure~\ref{fig:search-15b-multihop} reports exact match on each of the four
multi-hop test sets of Search-R1 \citep{jin2025searchr1}: HotpotQA, 2WikiMultiHopQA, MuSiQue and
Bamboogle, $22{,}523$ questions in total. Average turns are computed over the same $22{,}523$
questions. A turn is one model response, the answering one included, so an episode that reaches the
limit and receives the forced answer counts nine turns.
Both OPD and UOPD use seed $42$, greedy decoding, forced answers, and an eight-turn limit.

\begin{center}\small
\begin{tabular}{@{}l c c c@{}}
\toprule
 & ALFWorld & WebShop & Search \\
\midrule
Temperature & $0.4$ & $0.4$ & $0$ (greedy) \\
Max prompt / response tokens & $2{,}048$ / $512$ & $4{,}096$ / $512$ & as in training / $512$ \\
Episode limit & $50$ & $15$ & $8$, then a forced answer \\
Test set & $140$ seen, $134$ unseen & $128$ test & $22{,}523$ multi-hop questions \\
\bottomrule
\end{tabular}
\end{center}

\noindent Every evaluation uses the checkpoint of the final training step. A Search episode that
reaches the turn limit without an answer is given one additional turn that asks for the final answer
(Appendix~\ref{app:prompt-search}); this turn is used only at evaluation.

\subsection{Hardware}

Training runs use a single node, primarily with eight NVIDIA RTX PRO 6000
Blackwell GPUs. We also use four-GPU setups with NVIDIA H100 (80\,GB) or
NVIDIA RTX A6000 (48\,GB) GPUs. All reported training-time comparisons use the
eight-GPU RTX PRO 6000 Blackwell setup, with the allocation in
Table~\ref{tab:app-gpu-allocation}.

\section{Environment Prompts}
\label{app:prompts}

This section gives the prompts used for each environment during training and evaluation, verbatim.
Braces mark the fields filled at each step. The student and the teacher receive the identical prompt,
which lets the teacher score the student's own tokens, and each prompt is sent as a single
chat-formatted user message. We pass no system message, so the model's chat template adds its default
one: ``You are Qwen, created by Alibaba Cloud. You are a helpful assistant.'' for the Qwen2.5-Instruct
students, and ``You are a helpful assistant.'' for the base Search student. No prompt states the step
or turn limit; the environment enforces it.

\subsection{ALFWorld Prompts}
\label{app:prompt-alfworld}

ALFWorld is a text-based household environment in which the agent completes object-manipulation
tasks. At each step the agent reasons inside \texttt{<think>} \texttt{</think>} tags and then gives one
admissible action inside \texttt{<action>} \texttt{</action>} tags, which is executed in the
environment. The first step has its own template without the task, which the first observation
already contains; every later step uses the second template.

\begin{promptbox}{ALFWorld Prompt Template (first step)}
You are an expert agent operating in the ALFRED Embodied Environment.
Your current observation is: {current_observation}
Your admissible actions of the current situation are: [{admissible_actions}].

Now it's your turn to take an action.
You should first reason step-by-step about the current situation. This reasoning process MUST be enclosed within <think> </think> tags.
Once you've finished your reasoning, you should choose an admissible action for current step and present it within <action> </action> tags.
\end{promptbox}

\begin{promptbox}{ALFWorld Prompt Template (later steps)}
You are an expert agent operating in the ALFRED Embodied Environment. Your task is to: {task_description}
Prior to this step, you have already taken {step_count} step(s). Below are the most recent {history_length} observations and the corresponding actions you took: {action_history}
You are now at step {current_step} and your current observation is: {current_observation}
Your admissible actions of the current situation are: [{admissible_actions}].

Now it's your turn to take an action.
You should first reason step-by-step about the current situation. This reasoning process MUST be enclosed within <think> </think> tags.
Once you've finished your reasoning, you should choose an admissible action for current step and present it within <action> </action> tags.
\end{promptbox}

\noindent \texttt{\{admissible\_actions\}} lists the environment's admissible commands except
\texttt{help}, each in single quotes, one per line. The history holds the most recent steps, one per
line, each as \texttt{[Observation \{i\}: '\{obs\}', Action \{i\}: '\{act\}']}, and
\texttt{\{history\_length\}} is the number shown: $\min(2, \text{steps taken})$.

\subsection{WebShop Prompts}
\label{app:prompt-webshop}

WebShop is a simulated e-commerce website in which the agent searches for, selects and buys a product
that matches a natural-language instruction. The first template is used for the first two steps and
the second from the third step on, always with the two most recent steps as history
($\texttt{\{history\_length\}}=2$, entries as in ALFWorld, each observation flattened to one line).

\begin{promptbox}{WebShop Prompt Template (first two steps)}
You are an expert autonomous agent operating in the WebShop e-commerce environment.
Your task is to: {task_description}.
Your current observation is: {current_observation}.
Your admissible actions of the current situation are:
[
{available_actions}
].

Now it's your turn to take one action for the current step.
You should first reason step-by-step about the current situation, then think carefully which admissible action best advances the shopping goal. This reasoning process MUST be enclosed within <think> </think> tags.
Once you've finished your reasoning, you should choose an admissible action for current step and present it within <action> </action> tags.
\end{promptbox}

\begin{promptbox}{WebShop Prompt Template (later steps)}
You are an expert autonomous agent operating in the WebShop e-commerce environment.
Your task is to: {task_description}.
Prior to this step, you have already taken {step_count} step(s). Below are the most recent {history_length} observations and the corresponding actions you took: {action_history}
You are now at step {current_step} and your current observation is: {current_observation}.
Your admissible actions of the current situation are:
[
{available_actions}
].

Now it's your turn to take one action for the current step.
You should first reason step-by-step about the current situation, then think carefully which admissible action best advances the shopping goal. This reasoning process MUST be enclosed within <think> </think> tags.
Once you've finished your reasoning, you should choose an admissible action for current step and present it within <action> </action> tags.
\end{promptbox}

\paragraph{Action format for WebShop.} \texttt{\{available\_actions\}} lists one action per line, of
two types:
\begin{itemize}
  \item \texttt{search[<query>]}: search the catalog with a text query; listed only when the page has a
    search bar.
  \item \texttt{click[<element>]}: click an element of the current page, such as a product, an option
    or a navigation button; one line per clickable element.
\end{itemize}

\subsection{Search Prompts}
\label{app:prompt-search}

Search is open-domain question answering with a retrieval tool. The prompt is Search-R1's original one,
on which the teacher was trained, reproduced verbatim.

\begin{promptbox}{Search Prompt Template (Search-R1's original prompt)}
Answer the given question. You must conduct reasoning inside <think> and </think> first every time you get new information. After reasoning, if you find you lack some knowledge, you can call a search engine by <search> query </search> and it will return the top searched results between <information> and </information>. You can search as many times as your want. If you find no further external knowledge needed, you can directly provide the answer inside <answer> and </answer>, without detailed illustrations. For example, <answer> Beijing </answer>. Question: {question}
\end{promptbox}

\noindent The episode so far is appended directly after the prompt: each earlier model output, cut after
its first complete \texttt{<search>}\dots\texttt{</search>} or \texttt{<answer>}\dots\texttt{</answer>},
followed by the environment's reply. After a search the reply is the top $3$ passages, on its own line.
An output with neither a complete \texttt{<search>} nor a complete \texttt{<answer>} receives an
invalid-action reply instead.

\begin{promptbox}{Retrieved Passages (appended after each search)}
<information>Doc 1: {passage}
Doc 2: {passage}
Doc 3: {passage}</information>
\end{promptbox}

\begin{promptbox}{Invalid-Action Reply}
Invalid action. After your reasoning, choose exactly one action: <search> your query </search> or <answer> your final answer </answer>.
\end{promptbox}

\noindent Generation stops at \texttt{</search>} or \texttt{</answer>}. A search continues the episode
and an answer ends it; the episode is scored by exact match of its last \texttt{<answer>}. As in
Search-R1, the prompt does not state the turn limit; the environment ends the episode after $8$ turns.
In training, an episode that reaches the limit without an answer scores $0$. At evaluation it receives
one more turn, whose prompt is the Search prompt above with the episode so far, followed on a new line
by the forced-answer instruction.

\begin{promptbox}{Forced Final Answer (evaluation only)}
You can no longer search. Based on the information above you MUST now give your final answer inside <answer> and </answer>. For example, <answer> Beijing </answer>.
\end{promptbox}

\FloatBarrier

\section{Intervention Dynamics during Training}
\label{app:telemetry}
\label{sec:analysis}

We show intervention dynamics on ALFWorld for the 3B student with a
$30\%\!\to\!5\%$ schedule, examining how often UOPD intervenes and
where corrections occur during training. The run uses teacher confidence,
$\beta=1$, and teacher action execution. The curves show unsmoothed batch
statistics from this training run. The horizontal axis is the rollout
batch index $n$ in Eq.~\ref{eq:intervention_schedule}; asynchronous rollout
collection and optimization use different counters. The schedule decays over
$120$ rollout batches.

The actual intervention rate is computed within each trajectory and then averaged
across the batch. For intervention position, we average the zero-based turn indices
of corrections within each trajectory, then average over trajectories with at
least one correction. Trajectories without corrections contribute zero to the
rate but are excluded from the position mean.

Figure~\ref{fig:alfworld-intervention-dynamics} shows a decreasing intervention
rate. The episode-averaged rate falls below the scheduled
target, particularly late in training. Part of this gap comes from how trajectories
are weighted: quantile calibration uses a window of action scores, whereas the
plotted rate gives each trajectory equal weight. Over the final $20$ batches,
the episode-averaged rate is approximately $2\%$, while averaging the batch-level
ratios of total corrections to total turns gives $4\%$, closer to the $5\%$
target. This difference indicates that corrections are more concentrated in
longer trajectories.

Mean intervention position remains variable and does not show a sustained shift
toward earlier turns. Only about $22\%$ of trajectories receive any correction in the final $20$
batches, so the position curve describes a small subset of the collected rollouts.
Together, these results show that declining intervention frequency need not imply
a uniform shift in correction position: UOPD continues to select turns according
to uncertainty within the trajectories encountered by the student.

\begin{figure}[!htbp]
\centering
\includegraphics[width=0.48\textwidth]{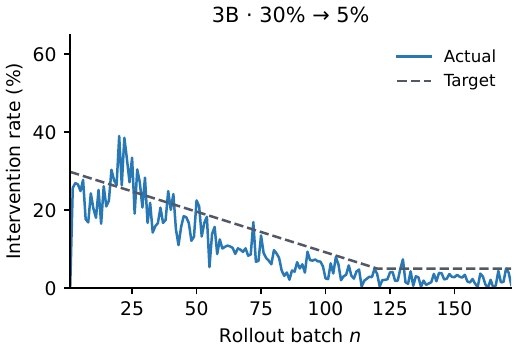}\hfill
\includegraphics[width=0.48\textwidth]{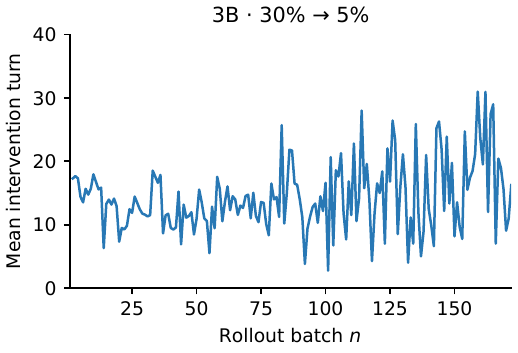}
\caption{\textbf{Intervention dynamics on ALFWorld with a 3B student and a $30\%\!\to\!5\%$ schedule.}
Left: actual episode-averaged intervention rate (blue) and scheduled target
(dashed gray). Right: mean intervention turn among corrected trajectories.
Interventions become less frequent, while their positions remain variable.}
\label{fig:alfworld-intervention-dynamics}
\end{figure}

\end{document}